\def\BibTeX{{\rm B\kern-.05em{\sc i\kern-.025em b}\kern-.08em
		T\kern-.1667em\lower.7ex\hbox{E}\kern-.125emX}}
\newtheorem{proposition}{Proposition}
\newtheorem{theorem}{Theorem}
\begin{document}

\title{Subspace Clustering with Active Learning\\
	%{\footnotesize \textsuperscript{*}Note: Sub-titles are not captured in Xplore and
	%should not be used}
	%\thanks{Identify applicable funding agency here. If none, delete this.}
}

\author{\IEEEauthorblockN{Hankui Peng}
	%1\textsuperscript{st} Given Name Surname}
	\IEEEauthorblockA{\textit{STOR-i Centre for Doctoral Training} \\
		\textit{Lancaster University}\\
		Lancaster, UK \\
		h.peng3@lancaster.ac.uk}
	\and
	\IEEEauthorblockN{Nicos G. Pavlidis}
	\IEEEauthorblockA{\textit{Department of Management Science} \\
	\textit{Lancaster University}\\
	Lancaster, UK \\
	n.pavlidis@lancaster.ac.uk}
	%\and
	%\IEEEauthorblockN{3\textsuperscript{rd} Given Name Surname}
	%\IEEEauthorblockA{\textit{dept. name of organization (of Aff.)} \\
	%\textit{name of organization (of Aff.)}\\
	%City, Country \\
	%email address}
	%\and
	%\IEEEauthorblockN{4\textsuperscript{th} Given Name Surname}
	%\IEEEauthorblockA{\textit{dept. name of organization (of Aff.)} \\
	%\textit{name of organization (of Aff.)}\\
	%City, Country \\
	%email address}
	%\and
	%\IEEEauthorblockN{5\textsuperscript{th} Given Name Surname}
	%\IEEEauthorblockA{\textit{dept. name of organization (of Aff.)} \\
	%\textit{name of organization (of Aff.)}\\
	%City, Country \\
	%email address}
	%\and
	%\IEEEauthorblockN{6\textsuperscript{th} Given Name Surname}
	%\IEEEauthorblockA{\textit{dept. name of organization (of Aff.)} \\
	%\textit{name of organization (of Aff.)}\\
	%City, Country \\
	%email address}
}

\maketitle

\begin{abstract}
% Why SC, why AL for SC.

Subspace clustering is a growing field of unsupervised learning that has gained
much popularity in the computer vision community. Applications can be found in areas such as motion
segmentation and face clustering.  It assumes that data originate from a union
of subspaces, and clusters the data depending on the corresponding subspace. In
practice, it is reasonable to assume that a limited amount of labels can be
obtained, potentially at a cost. Therefore, algorithms that can effectively and efficiently
incorporate this information to improve the clustering model are desirable.
In this paper, we propose an active learning framework for subspace clustering
that sequentially queries informative points and updates the subspace
model. The query stage of the proposed framework
relies on results from the perturbation theory of principal component analysis,
to identify influential and potentially misclassified points.
%
%The proposed framework contains two stages that are
%performed iteratively. In the first stage, we propose an active learning
%strategy that queries the true classes of those most informative and
%potentially misclassified data objects. In the second stage, we propose a
%
A constrained subspace clustering algorithm is proposed that monotonically decreases the
objective function subject to the constraints imposed by the labelled data.
We show that our proposed framework is suitable for subspace clustering
algorithms including iterative methods and spectral methods. Experiments on
synthetic data sets, motion segmentation data sets, and Yale Faces data sets
demonstrate the advantage of our proposed active strategy over
state-of-the-art.

% Motivated by the classic $K$ Subspace Clustering (KSC) algorithm \cite{bradley2000k}, we consider a data object as informative if it leads to a large decrease in the total reconstruction error upon knowing its true class. 

%To update the subspace model with labelling information about the queried data, we have a constrained subspace clustering problem. 

% While semi-supervised clustering is often used to incorporate partially available labelling information, it is a passive learning process and assumes the available labels come at once. We instead introduce active learning to subspace clustering that incorporates labels sequentially and intelligently in an online manner. 

% To combine with the unsupervised iterative clustering algorithm, we query the true classes of those potentially very informative data. As a result, those data objects that have likely been misclassified are corrected. We can then recover the correct subspace structure, which helps to determine the true labels for the remaining data objects. 
\end{abstract}

\begin{IEEEkeywords}
	high dimensionality; active learning; subspace clustering; constrained clustering 
\end{IEEEkeywords}

\section{Introduction}
%Within the computer vision community in machine learning, ... is a fascinating problem that has drawn much attention in recent years due to the burgeoning amount of publicly available data. 

In recent years crowdsourcing \cite{su2012crowdsourcing} for data annotation
has drawn much attention in the computer vision community, due to the need to
make use of as much data as possible and the lack of sufficient
labelled data. Clustering is commonly used as an
initial step to provide a coarse preliminary grouping in the absence of labelled
data. 
%
%In particular, we can easily find numerous applications of subspace
%clustering in image recognition and motion segmentation. 
%
For example, there are
plant recognition apps that allow one to take a photo of a plant and identify
its species. In video surveillance, one may wish to identify the points in a
sequence of frames, be it people or cars etc. Usually
some form of external information is available in these applications, either
through crowdsourcing websites, or through paid manual work
to conduct a limited amount of labelling. In either case, obtaining
labels involves a cost which is either in time, money or both. Therefore, effective and
efficient ways of carrying out data annotation are desirable.  

% traditional ways of doing things in AL
% state of the art (other people's) ways of doing things in AL
% how are we gonna go about to do it for subspace clustering

The process of iteratively annotating the potentially misclassified data and
subsequently updating the model is generally known as active learning
\cite{settles2008curious}. It is a subfield of machine learning that aims to
improve both supervised and unsupervised algorithms. In supervised learning,
points that are near the decision boundary are likely to be misclassified. In
unsupervised learning, the notion of potentially misclassified points is less
clear and is open to interpretation. 

% Each subspace is defined by a set of basis vectors that can be obtained through Principal Component Analysis (PCA). 

In subspace clustering, points are clustered according to their underlying
subspaces. 
%Various applications can be found in computer vision, motion segmentation, etc.
%
There are different ways of measuring how likely a point is
misclassified. 
One approach is to consider points whose projection onto the associated subspace is large  
as potentially misclassified
\cite{lipor2015margin}. 
Alternatively, points that are almost equidistant to their two nearest subspaces are likely to
be misclassified \cite{lipor2017leveraging}. Such ideas are based on the notion
of {\em reconstruction error} between the original point and its projection 
to the subspace that defines the cluster.
The total reconstruction error is the objective function of the $K$-Subspace Clustering (KSC) algorithm
\cite{bradley2000k}. 
We therefore argue that effective active learning strategies should explicitly
associate the query procedure with the optimisation of this objective.
%
%methods that attempt to minimise reconstruction error would be
%
However as we will discuss in greater detail in the next section,
%objects with large reconstruction error are
%not necessarily misclassified,  and more importantly (even if these objects are misclassified)
the points with the largest reconstruction error are not 
necessarily the most informative from the perspective of
updating the entire subspace clustering model.

% However, querying the data objects that have large reconstruction errors does not necessarily make a difference to the correct recovery of the subspace locations hence the correct assignments of other data objects.

% The reconstruction error is the objective of $K$ Subspace Clustering (KSC) \cite{bradley2000k}, and 

%the active strategies that work towards uncovering the potentially misclassified data should be minimising the total reconstruction error in the data at the same time.   

% There are various ways to determine the subspace that a data object belongs to. $K$ Subspace Clustering (KSC) \cite{bradley2000k}, for example, allocates each data object to the subspace that it has the smallest reconstruction error to. 

Motivated by 
%
%this, we consider a data object influential and potentially
%misclassified based on the reconstruction error of all data objects that belong
%to the same subspace. 
%
the connection between the reconstruction error and the KSC objective, we
%
%That is, a data object is 
%
consider a point to be influential if querying its true class can lead to a
%
%updating the corresponding subspace
%
large decrease in the total reconstruction error.
Given a set of cluster assignments, the optimal linear subspace for
each cluster can be trivially estimated through Principal Component Analysis (PCA)
%The bases that minimise the reconstruction error within the cluster
%can be obtained through Principal Component Analysis (PCA)
\cite{jolliffe2011principal}. In particular, the basis for each subspace (cluster)
can be defined through the set of eigenvectors of the
covariance matrix of the points that are assigned to this cluster. 
We make use of ideas 
from perturbation analysis of PCA~\cite{critchley1985influence} 
to evaluate efficiently how influential each point is, and query 
the class of the most informative point(s).
%
% The subspace bases vectors minimise the reconstruction error within the cluster, which can be obtained through Principal Component Analysis (PCA) \cite{jolliffe2011principal}. 
%
Once the true classes of the influential points have been identified, 
%
%we need
%to address the problem of updating the subspace bases and the cluster
%assignments of both labelled and unlabelled data. 
%
%The challenge lies in the
%fact that querying the true classes of certain data objects does not provide
%the association between the classes and the assignment labels. 
%
our proposed $K$-subspace clustering with constraints (KSCC) algorithm monotonically reduces
the reconstruction error while satisfying all the constraints imposed by the labelled data.
The active learning process iterates between these two query and update
procedures until the query budget is exhausted.

%Our contributions are as follows. First, we propose three active strategies that are useful in different scenarios depending on the subspace structure. Secondly, we introduce a constrained $K$-subspace clustering (KSCC) algorithm that monotonically reduces the reconstruction error while satisfying the imposed constraints by queried data. 

The rest of the paper is organised as follows. We review related work in active
learning in Section \ref{rw2}, and introduce our proposed active framework in
Section \ref{alf}. Experimental results on synthetic and real
data are discussed in Section \ref{al_er}. The paper finishes
in Section \ref{al_conclusion} with conclusions and directions for future work.

\section{Related Work}\label{rw2}

%The use of active learning arises in application areas such as computer vision
%where a large volume of data are available and a limited number of labels can
%be provided by humans. 

There are three main approaches to active learning \cite{settles2008curious}:
uncertainty sampling \cite{balcan2007margin}, query by committee
\cite{seung1992query}, and expected model change \cite{settles2008multiple}. 
%
%\cite{tong2001support, freund1997selective}
%
%\subsection{Uncertainty Sampling}

\emph{Uncertainty sampling} queries the points the learning
algorithm is least confident about. Classic uncertainty sampling methods are generally
ignorant to the data distribution, thus prone to select outliers
\cite{donmez2007dual}. It is suggested in \cite{melville2004diverse} to measure the
informativeness of each point by the probability margin between the label
it is assigned to and its second most likely label. Other versions of
uncertainty sampling have been proposed to balance the density of a region and
the uncertainty in that region \cite{nguyen2004active}. 
When building supervised models,
one may also choose the unlabelled points near the decision
boundary.

% There are several ways of measuring the uncertainty in the labelling of the data, for example one may choose those data that are closest to the decision boundary in a supervised setting. 

% active learning strategies for supervised models
\emph{Query by committee (QBC)} is a type of active learning strategy designed for
classifier ensembles \cite{seung1992query}. It constructs a committee of models
based on the labelled training data, and chooses to query the unlabelled points upon
which the predictions of the classifiers in the ensemble disagree the most. It enables the
training of accurate classifiers using a small subset of the data. To use this
strategy, one has to provide both the type of classifier and a measure
of disagreement among the classifiers. It has been shown in
\cite{freund1997selective} that rapid decrease in the misclassification error is
guaranteed if the queries have high expected information gain.

\emph{Expected model change} is an active learning framework that bases its query strategy on the idea that a point is informative if knowing its true class can cause a big change in the
current model \cite{settles2008multiple}. This is mostly applied to
discriminative probabilistic modelling, in which the gradient of the model is used as an indicator for the informativeness of a point.
%
%\textcolor{red}{in which case the resulting parametergradient is used to as a measure of informativeness, and the step size is a good measure of how big the change is. }
%
It is widely applied to image retrieval and
text classification \cite{roy2001toward}.
The method we propose also adopts this approach, but we are the first to consider
updating an unsupervised learning model describing all the data rather than just the labelled data.

% active learning strategies for unsupervised models
As is implied above, most active learning approaches have been
developed for supervised learning. However less attention has been paid to
the unsupervised counterpart. To the best of our knowledge, only a few active learning strategies have been proposed for subspace clustering
\cite{lipor2015margin,lipor2017leveraging}. In \cite{lipor2015margin}, two
active strategies \emph{MaxResid} and \emph{MinMargin} for KSC are proposed. 
\emph{MaxResid} queries points that have large reconstruction
error to their allocated subspaces. \emph{MinMargin} queries points that
are maximally equidistant to their two closest subspaces. 
%
%These two strategies are effective ways to identify points that are potentially misclassified. 
These two strategies are effective in identifying the points that are most likely to be misclassified. 
However, these points are not necessarily the most informative points in terms of updating the full clustering model.

\section{Active Learning Framework} \label{alf}
In this section, we first formulate the subspace clustering problem 
and then present the proposed active learning framework.
There are two iterative procedures within this framework. The first is to identify
the most influential and potentially misclassified points. 
The second is to update the cluster assignment for all the data 
given the labelled points.

\subsection{$K$-Subspace Clustering}
A $q$-dimensional linear subspace $\mathcal{S}_{k}$, $k\in\left\{1,\ldots,K
\right\}$, can be defined through an orthonormal matrix $V_{k}\in\mathbb{R}^{P\times q}$ as
%a set of basis vectors as
%
\begin{equation}
\mathcal{S}_{k} = \left\{\textbf{x}\in\mathbb{R}^{P}:\textbf{x}=V_{k}\textbf{y} \right\},
\end{equation}
where the columns of $V_{k}$ constitute a basis for $\mathcal{S}_{k}$.

In subspace clustering, the overall objective is to find the set of optimal cluster assignments for all data points such that the total reconstruction error between each data point to their corresponding subspaces is minimised. The loss function value $L(\textbf{x}_{i},V_{k_{i}})$ and the objective $f(\mathcal{X},\mathcal{V})$ can be written as
\begin{equation}
L(\textbf{x}_{i},V_{k_{i}}) =\| \textbf{x}_{i} - V_{k_{i}} V_{k_{i}}^\top \textbf{x}_{i} \|_{2}^{2}, \text{ and}
\end{equation}
\begin{equation}
f(\mathcal{X},\mathcal{V})=\min_{ V_{1},\ldots,V_{K} } \sum_{i=1}^{N} \min_{k_{i} \in \{1,\ldots,K\}} L(\textbf{x}_{i},V_{k_{i}}),
\label{eq3}
\end{equation}
where $\mathcal{X}=\left\{\textbf{x}_{1},\ldots, \textbf{x}_{N} \right\}$ contains the set of all $N$ points, and $\mathcal{V}=\left\{V_{1},\ldots,V_{K} \right\}$ represents the set of all subspace bases. This objective can be minimised through a $K$-means-like iterative algorithm by alternating between \emph{subspace estimation} and \emph{cluster assignment} \cite{bradley2000k}. 

Given a set of cluster assignments $\Omega=\left\{k_{1},\ldots,k_{N} \right\}$, we need to obtain the set of subspace bases such that the total reconstruction error in Eq. \eqref{eq3} is minimised. The basis matrix $V_{k}$ for each subspace $k$ can be obtained through the eigen-decomposition of its covariance matrix as 
\begin{equation}\label{eigdecomp}
%[V\text{ } \Lambda] = \text{eig}(\text{cov}(X_{k})),
(X_{k} - \boldsymbol{\mu}_k)^{T} (X_{k} -\boldsymbol{\mu}_k) = V_k^{*} \Lambda_k^{*} (V_k^{*})^{T}.
\end{equation}
We denote $X_{k}\in\mathbb{R}^{n_{k}\times P}$ as the data matrix that contains the $n_{k}$ points assigned to cluster $k$, and $\boldsymbol{\mu}_k$ as the column-wise mean vector of $X_k$. $V_k^{*}$ is a $P\times P$ matrix whose columns correspond to the eigenvectors of the covariance matrix of $X_{k}$, and $\Lambda_{k}^{*}$ is a diagonal matrix containing the $P$ eigenvalues. We denote $V_{k}$ as the subset of eigenvectors in $V_{k}^{*}$ that corresponds to the $q$ largest eigenvalues.

Given the subspace bases $\mathcal{V}=\left\{V_{1},\ldots,V_{K} \right\}$, the cluster assignment $k_{i}$ for each point $\textbf{x}_{i}\in\mathcal{X}$ can be obtained as
\begin{equation}
k_{i}=\arg\min_{k\in\left\{1,\ldots,K\right\}}\left\|\textbf{x}_{i}-V_{k}V_{k}^{T}\textbf{x}_{i} \right\|_{2}^{2}.
\end{equation}

The algorithm terminates when the loss function value in Eq.~\eqref{eq3} stops decreasing, which means either a local or global optimum is reached.

\subsection{Query Procedure} \label{query}

The first element in quantifying the influence of an unlabelled point is 
the reduction in the reconstruction error that would be achieved
if this point is removed from its currently assigned cluster. It is important
to note that removing a point from a cluster implies that the basis for the associated
linear subspace can change, because $V_{k}$ is a function of $X_k$ (see Eq.~\eqref{eigdecomp}).
Explicitly, we define $U_1(\textbf{x}_{s},V_{k_{s}})$ as the decrease
in the reconstruction error after removing the queried point $\textbf{x}_s$ from cluster $k_s$ as follows
\begin{equation}\label{eq_u1}
U_{1}(\textbf{x}_{s},V_{k_{s}})=\sum_{\textbf{x}\in\mathcal{X}_{k_{s}}}L(\textbf{x},V_{k_{s}})-\sum_{\textbf{x}\in\mathcal{X}_{k_{s}}\backslash\left\{\textbf{x}_{s}\right\}}L(\textbf{x},\tilde{V}_{k_{s}}),
\end{equation}
%
%which is the decrease in the reconstruction error of cluster $k_{s}$ after $\textbf{x}_{s}$ is removed from the set of data objects $\mathcal{X}_{k_{s}}$ in cluster $k_{s}$. 
%
where $\mathcal{X}_{k_s}$ denotes the set of points in cluster $k_s$, and $V_{k_{s}}$
denotes the basis for cluster $k_{s}$. We use $\tilde{V}_{k_{s}}$ to denote the potentially perturbed basis after point $\textbf{x}_s$ is removed.

The second element in quantifying the influence of an unlabelled point is to consider
the increase in the reconstruction error of the cluster that $\textbf{x}_{s}$ will be assigned to (after
being removed from its current cluster $k_{s}$). As before, adding a point to a cluster implies
that the associated basis for this cluster can change.
%
%addition of it to its potential true cluster incurs a small increase in the
%reconstruction error of that cluster. Naturally, we can express the influence
%of removing a data object $\textbf{x}_{s}$ ($s\in\left\{1,\ldots,N \right\}$)
%from its assigned cluster $k_{s}$ ($k_{s}\in\left\{1,\ldots,K \right\}$) as
%
% Note that the bases of a subspace would unavoidably change once a data object is removed from or added to it. 
%
% The new set of bases $\tilde{V}_{k_{s}}$ can again be obtained from PCA. 
%
%Next, we discuss the influence of reallocating $\textbf{x}_{s}$ to some other cluster $k$ ($k\neq k_{s}$). 
%
%The main interest lies in the influence of adding $\textbf{x}_{s}$ to its potential true cluster $k_{s}^{*}$. 
%
Given that points are allocated to their closest subspace, it is thus sensible to assume
the cluster that $\textbf{x}_{s}$ has the second smallest reconstruction error to is where $\textbf{x}_{s}$ would be assigned next. This can be expressed as 
%its most probable true cluster. This can be expressed as 
%
%gives the smallest reconstruction error, it is reasonable to assume that the cluster that $\textbf{x}_{s}$ has the second smallest reconstruction error to is its most probable true cluster, which can be expressed as 
%

\begin{equation}
k^{*}_{s}=\arg\min_{k\in\left\{1,\ldots,K \right\}\backslash \{k_{s}\}}L(\textbf{x}_{s},V_{k}).
\end{equation} 
Then we can define $U_{2}(\textbf{x}_{s},V_{k_{s}^{*}})$ as the increase in the reconstruction error after adding $\textbf{x}_{s}$ to cluster $k_{s}^{*}$, which can be expressed as 
\begin{equation}\label{eq_u2}
U_{2}(\textbf{x}_{s},V_{k_{s}^{*}})=\sum_{\textbf{x}\in\mathcal{X}_{k_{s}^{*}}\cup\left\{\textbf{x}_{s}\right\}}L(\textbf{x},\tilde{V}_{k_{s}^{*}})-\sum_{\textbf{x}\in\mathcal{X}_{k_{s}^{*}}}L(\textbf{x},V_{k_{s}^{*}}).
\end{equation}
Here $\tilde{V}_{k_{s}^{*}}$ is the $P\times q$ basis matrix whose columns are the
eigenvectors of the covariance matrix of the points in the set $\left\{\mathcal{X}_{k_{s}^{*}}\cup\left\{\textbf{x}_{s}\right\}\right\}$.

%where $\tilde{L}_{+}(\textbf{x},k)$ denotes the reconstruction error for the data objects from the $k$th perturbed subspace after $\textbf{x}_{s}$ is added. 

% It is natural to assume that the cluster which $\textbf{x}_{s}$ introduces the least extra reconstruction error to is its most probable true cluster. 

%where $U_{2}(\textbf{x}_{s},k)$ is the increase in reconstruction error by adding $\textbf{x}_{s}$ to cluster $k$. We can write down the expression for $U_{2}(\textbf{x}_{s},k)$ as

Combining the above two measures of influence together, we determine the most informative and likely to be misclassified point $\textbf{x}_{s}^{*}$ as
\begin{equation} \label{eq_al_obj}
\textbf{x}_{s}^{*}=\arg\max_{\textbf{x}_{s}\in\mathcal{X}_{U}} \left\{U_{1}(\textbf{x}_{s},V_{k_{s}})-U_{2}(\textbf{x}_{s},V_{k_{s}^{*}}) \right\},
\end{equation}
where $\mathcal{X}_{U}$ denotes the set of unlabelled points, and $\mathcal{X}_{L}$ the set of labelled points. Eq. \eqref{eq_al_obj} gives the point that brings the largest decrease in the reconstruction error once removed from its allocated cluster $k_{s}$, and the smallest increase in reconstruction error upon being reallocated to its most probable cluster $k_{s}^{*}$. 
%
%We propose an active learning framework that alternates between the following two stages:
%\begin{itemize}
%	\item In the first stage we have a \emph{query procedure}, which queries the labels of the most informative data objects as guided by Eq. \ref{eq_al_obj};
%	
%	%chooses a data object or a small set of data objects to request the label(s) of;
%	%Ideally, the queried data should be both representative and informative.
%	\item In the second stage we have an \emph{update procedure}, which adjusts all the cluster assignments and recalculates all the subspace locations.
%	
%	%a constrained projective $K$-Means algorithm that solves the subspace clustering problem with constraints given by the previous component.
%\end{itemize}
%These two procedures are applied iteratively until a user-specified query budget $T$ is met, which could either be time or monetary cost or the number of queries in practice.  For the ease of comparison, we consider $T$ as the number of queries from now on.

%The output of the query stage -- labels, is then posed as constraints to the second update stage. 

% We have quantified the influence of removing a data object from its currently allocated cluster, and the influence of adding it to its potential true cluster. 
% So far, we have discussed the conceptual procedures of our proposed active learning framework that involve the computations of two influence procedures. 

Although these two measures of influence can be quantified and calculated exactly, the number of required SVD computations is $\mathcal{O}(N^{2})$ throughout all iterations. 
Not to mention that the computational complexity of SVD is $\text{min}\left\{N^{2}P,P^{2}N \right\}$~\cite{golub2012matrix}.
Every time a point is removed from or added to a cluster, the subspace bases change and need to be recalculated through PCA. Hence the need to seek for an alternative approach, which could be pursued through the perturbation analysis of PCA \cite{critchley1985influence}. 

% Hence, we need to seek for an alternative approach that not only calculates the influence measures but in a more efficient manner. 

We approximate the perturbed covariance matrix, the perturbed eigenvectors and eigenvalues through power series expansions \cite{shi1997local}. As such, we can obtain expressions for the updated reconstruction error without having to recompute all the updated eigenvalues and eigenvectors after data deletion or addition. The algorithmic form of the query strategy is provided in Algorithm \ref{algo_al} before we detail the specifics of how the two influence measures are calculated.

\begin{algorithm}[h!]
	\caption{Query Strategy}
	\DontPrintSemicolon
	\SetAlgoLined
	\SetKwInOut{Input}{Input}
	\SetKwInOut{Output}{Output}
	\Input{Data matrix $X \in \mathbb{R}^{N\times  P}$ \\
		Number of clusters~$K$ \\
		Initial cluster assignment $\left\{k_{1},\ldots,k_{N} \right\}$}
	\BlankLine
	\Repeat{\text Budget $T$ or desired performance is reached}{
		%\% Query procedure\;
		%Construct graph: $G$ from adjacency matrix $A=Y^\top Y$\;
		\For{$\textbf{x}_{s}\in\mathcal{X}_{U}$}{
			Compute the influence $U_{1}(\textbf{x}_{s}, V_{k_{s}})$ of removing $\textbf{x}_{s}$ from its allocated cluster $k_{s}$ \;
			
			%Compute the utility function $U(\mathcal{S}, \textbf{x}_{u})$ of $\textbf{x}_{u}$ to its allocated subspace $S_{\omega_{u}}$ \;
			
			%			\For{$k=1,\ldots, K$ ($k\neq k_{s}$)}{
			%				%Estimate dissimilarity with previous subspaces, $D(i,j)$ through Eq.~(\ref{eq:dis})
			%				Reallocate $\textbf{x}_{s}$ from cluster $k_{s}$ to cluster $k$\;
			%				Evaluate $U_{2}(\textbf{x}_{s},k)$\;
			%				%Update both subspaces that are involved in the deletion and addition of $\textbf{x}_{u}$
			%			}
			Calculate $k^{*}_{s}=\arg\min_{k\in\left\{1,\ldots,K \right\}\backslash\{ k_{s}\}}L(\textbf{x}_{s},V_{k})$\;
			Calculate $U_{2}(\textbf{x}_{s},V_{k_{s}^{*}})$ using Eq. \eqref{eq_u2}
		}
		Optimise Eq. \eqref{eq_al_obj} to query $\textbf{x}_{s}^{*}$ and its true class $l_{s}\in\left\{1, \ldots, K\right\}$
		\BlankLine
		%		\% Update procedure\;
		%		Update all subspaces given currently allocated and newly queried labels\;
		%		Update cluster assignment of the data that haven't been queried so far\;
		%		\BlankLine
	}

	%\Output{A set of cluster labels $\left\{c_{i}\right\}_{i=1}^{N}$}
	%\KwResult{Write here the result}
	\label{algo_al}
\end{algorithm}
\textbf{The influence of data deletion.} 
%Let us start with considering the scenario for a general subspace / cluster $\omega$ ($\omega\in\left\{1,\ldots, K \right\}$). 
Let $S$ denote the sample covariance matrix for the points that belong to the same cluster, $\lambda_{1}\ldots,\lambda_{P}$ denote its eigenvalues in descending order, and $\textbf{v}_{1},\ldots, \textbf{v}_{P}$ denote its eigenvectors. Let $I$ denote a set of $l$ points to be removed from the cluster. As a result, the covariance matrix, its eigenvectors and eigenvalues will be perturbed by a certain amount. Under small perturbations ($0<\epsilon<1$), the perturbed covariance matrix $S(\epsilon)$, the $k$th perturbed eigenvalue $\lambda_{k}(\epsilon)$ and the $k$th perturbed eigenvector $\textbf{v}_{k
}(\epsilon)$ can be written as the following convergent power series
\begin{equation} 
\label{eq_power}
\begin{aligned}
&S(\epsilon)=S+S^{(1)}\epsilon+S^{(2)}\epsilon^{2}+\cdots+S^{(m)}\epsilon^{m}+\cdots,\\
&\lambda_{k}(\epsilon)=\lambda_{k}+\alpha_{1}\epsilon+\alpha_{2}\epsilon^{2}+\cdots+\alpha_{m}\epsilon^{m}+\cdots, \\
&\textbf{v}_{k}(\epsilon)=\textbf{v}_{k}+\boldsymbol{\psi}_{1}\epsilon+\boldsymbol{\psi}_{2}\epsilon^{2}+\cdots+\boldsymbol{\psi}_{m}\epsilon^{m}+\cdots.
\end{aligned}
\end{equation} 
For sufficiently small $\epsilon$, the order of the eigenvalues is maintained, so are the signs of the eigenvectors \cite{enguix2005influence}.

% Once this is done, we can then obtain an estimate for the reconstruction error of the cluster after a set $I$ of data objects are removed from it. 
The main interest lies in finding the coefficients in the power series approximations. First the perturbed sample covariance matrix $S_{(I)}^{-}$ can be deduced from basic definitions of the covariance matrix \cite{wang1993effects, benasseni2018correction},
\begin{equation}
% break this equation into two lines
\begin{split}
S_{(I)}^{-}=S+\frac{l}{n-l}\left[(S-S_{I})-(\bar{\textbf{x}}_{I}-\bar{\textbf{x}})(\bar{\textbf{x}}_{I}-\bar{\textbf{x}})^{T} \right]
\\-\frac{l^{2}}{(n-l)^{2}}(\bar{\textbf{x}}_{I}-\bar{\textbf{x}})(\bar{\textbf{x}}_{I}-\bar{\textbf{x}})^{T}.
\end{split}
\end{equation}
In the above expression, $n$ denotes the original number of points in the cluster. We use $\bar{\textbf{x}} \in\mathbb{R}^{P}$ to denote the feature-wise mean vector of the data before the removal of $l$ points, and $\bar{\textbf{x}}_{I}$ the feature-wise mean vector of the $l$ points to be removed. Lastly, we use $S$, $S_{I}$, and $S_{(I)}^{-}$ to denote the original covariance matrix, the covariance matrix of the deleted data, and the covariance matrix of the perturbed data, respectively. We can associate $\frac{l}{n-l}$ and $\left((S-S_{I})-(\bar{\textbf{x}}_{I}-\bar{\textbf{x}})(\bar{\textbf{x}}_{I}-\bar{\textbf{x}})^{T} \right)$ with $\epsilon$ and $S^{(1)}$ as in Eq. \eqref{eq_power}. Similarly, the correspondence can be made for the second order coefficients. We use a first order approximation for our purpose from now on, as it has been shown to be sufficiently accurate \cite{wang1993effects}.
%for ease of computation, as it is accurate enough from a practical point of view \cite{wang1993effects}. 

As for the coefficients in the approximations for the eigenvalues and eigenvectors, Lemma 2 in \cite{wang1993effects} provides us with the following results 
\begin{equation}
\alpha_{1} = \textbf{v}_{k}^{T}S^{(1)}\textbf{v}_{k}, \text{   } \alpha_{m}=\textbf{v}_{k}^{T}S^{(1)}\boldsymbol{\psi}_{m-1};
\label{eq_pert}
\end{equation}
and
\begin{equation}
\begin{split}
&\boldsymbol{\psi}_{1}=-(S-\lambda_{k} \textbf{I})^{+}S^{(1)}\textbf{v}_{k},\\
&\boldsymbol{\psi}_{m}=-(S-\lambda_{k} \textbf{I})^{+}\left(S^{(1)}\boldsymbol{\psi}_{m-1}-\sum_{i=1}^{m-1}\alpha_{i}\boldsymbol{\psi}_{m-i} \right).
\end{split}
\end{equation} 
In the above expression, we have the Moore-Penrose inverse $(S-\lambda_{k} I)^{+}=\sum_{j\neq k}\frac{\textbf{v}_{j}\textbf{v}_{j}^{T}}{(\lambda_{j}-\lambda_{k})}$ \cite{golub2012matrix}. Based on the above, we can deduce expressions for the perturbed eigenvalues, and the influence of data deletion as expressed in Eq. \eqref{eq_u1}. 

We start with writing the first order approximation of the $k$th ($k\in\left\{1,\ldots,P \right\}$) perturbed eigenvalue as follows
\begin{equation}
\begin{aligned}
\lambda_{k}(\epsilon)&=\lambda_{k}+\epsilon\lambda_{k}^{(1)}+\mathcal{O}(\epsilon^{2})\\
&\approx\lambda_{k}+\epsilon \textbf{v}_{k}^{T}S^{(1)}\textbf{v}_{k}\\
&= \lambda_{k}+\frac{l}{n-l} \textbf{v}_{k}^{T}\left[S-\frac{1}{l}\sum_{i\in I} (\textbf{x}_{i}-\bar{\textbf{x}})(\textbf{x}_{i}-\bar{\textbf{x}})^{T}\right]\textbf{v}_{k}\\
&=\lambda_{k}+\frac{l}{n-l}\left[\lambda_{k}-\frac{1}{l}\sum_{i\in I}\alpha_{ki}^{2} \right]\\
&=\frac{n}{n-l}\lambda_{k}-\frac{1}{n-l}\sum_{i\in I}\alpha_{ki}^{2},
\end{aligned}
\end{equation}
where $\alpha_{ki}=\textbf{v}_{k}^{T}(\textbf{x}_{i}-\bar{\textbf{x}})$. Then using the expression in Eq.~\eqref{eq_u1}, we can write the influence of removing a set $I$ of data~$X_{I}\in\mathbb{R}^{l\times P}$~from cluster $k$ as
\begin{equation}
\begin{aligned}
U_{1}(X_{I},k)
&=\sum_{\textbf{x}\in\mathcal{X}_{k}}L(\textbf{x},V_{k})-\sum_{\textbf{x}\in\mathcal{X}_{k}\backslash\left\{\textbf{x}_{i}:i\in I \right\}}L(\textbf{x},\tilde{V}_{k})\\
&=\sum_{k=q+1}^{P}\lambda_{k}-\sum_{k=q+1}^{P}\lambda_{k}(\epsilon)\\
&=\sum_{k=q+1}^{P}\left(\frac{1}{n-l}\sum_{i\in I}\alpha_{ki}^{2}-\frac{l}{n-l}\lambda_{k} \right).
\end{aligned}
\end{equation}
One can obtain the influence for the deletion of one point by plugging in $l=1$. The deduction follows due to the equivalence between the reconstruction error and the sum of the unused eigenvalues in representing the subspace \cite{jolliffe2011principal}.

\textbf{The influence of data addition.} In the previous section, we have shown the influence of data deletion through perturbation analysis of the eigenvalues and eigenvectors. The aim is to find influential points whose true classes might differ from their currently allocated labels. 
%
%Now we also need to find another cluster on which the deleted data have little influence if being added to. 
Now we assess the impact on the reconstruction error for the cluster to which the removed points are added.
Following the same line of analysis as before, we now let $X$ denote the data matrix that the set of $l$ points are to be added to, and $n$ the number of points in $X$. We denote the data after the addition of $l$ points as $X_{I_{+}}$, and the corresponding sample covariance matrix $S_{(I)}^{+}$  which combines the original data $X$ and the data to be added $X_{I}$. 

\begin{proposition}
	The form of $S_{(I)}^{+}$ can be expressed as follows,
	\begin{equation}\label{eq_si+}
	\begin{split}
	S_{(I)}^{+}=S+\frac{l}{n+l}\left[(S_{I}-S)-(\bar{\textbf{x}}_{I}+\bar{\textbf{x}})(\bar{\textbf{x}}_{I}+\bar{\textbf{x}})^{T} \right]\\
	+\frac{l^{2}}{(n+l)^{2}}\left(\bar{\textbf{x}}_{I}+\bar{\textbf{x}} \right)\left(\bar{\textbf{x}}_{I}+\bar{\textbf{x}} \right)^{T}.
	\end{split}
	\end{equation}
	\label{prop1}
\end{proposition}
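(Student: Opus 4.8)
The plan is to obtain $S_{(I)}^{+}$ directly from the definition of the sample covariance matrix, mirroring the data-deletion derivation that precedes the statement. I treat the covariances as count-normalised scatter matrices, writing $nS=\sum_{i=1}^{n}(\textbf{x}_{i}-\bar{\textbf{x}})(\textbf{x}_{i}-\bar{\textbf{x}})^{T}$ for the original cluster and $lS_{I}=\sum_{i\in I}(\textbf{x}_{i}-\bar{\textbf{x}}_{I})(\textbf{x}_{i}-\bar{\textbf{x}}_{I})^{T}$ for the points being added, and I record the mean of the $n+l$ augmented points, $\bar{\textbf{x}}_{+}=(n\bar{\textbf{x}}+l\bar{\textbf{x}}_{I})/(n+l)$, which is the centre used to define $S_{(I)}^{+}$ for $X_{I_{+}}$. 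The only structural ingredient is the standard split of a pooled scatter matrix into within-group and between-group parts.

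Next I would write the total scatter of $X_{I_{+}}$ about $\bar{\textbf{x}}_{+}$ and split the sum over the two groups, the original $n$ points and the added $l$ points. For each group I insert its own mean through $\textbf{x}_{i}-\bar{\textbf{x}}_{+}=(\textbf{x}_{i}-\bar{\textbf{x}}_{G})+(\bar{\textbf{x}}_{G}-\bar{\textbf{x}}_{+})$ and expand. The cross terms vanish because the within-group deviations sum to zero, so each group contributes its own scatter plus a single rank-one between-group term $|G|(\bar{\textbf{x}}_{G}-\bar{\textbf{x}}_{+})(\bar{\textbf{x}}_{G}-\bar{\textbf{x}}_{+})^{T}$. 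This reduces the combined scatter to $nS+lS_{I}$ plus two rank-one pieces.

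I would then substitute the mean offsets $\bar{\textbf{x}}-\bar{\textbf{x}}_{+}=\frac{l}{n+l}(\bar{\textbf{x}}-\bar{\textbf{x}}_{I})$ and $\bar{\textbf{x}}_{I}-\bar{\textbf{x}}_{+}=\frac{n}{n+l}(\bar{\textbf{x}}_{I}-\bar{\textbf{x}})$, so that both rank-one terms become multiples of $(\bar{\textbf{x}}_{I}-\bar{\textbf{x}})(\bar{\textbf{x}}_{I}-\bar{\textbf{x}})^{T}$, with coefficients $\frac{nl^{2}}{(n+l)^{2}}$ and $\frac{ln^{2}}{(n+l)^{2}}$ that add to $\frac{nl}{n+l}$. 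Dividing the whole expression by $n+l$ gives the compact form
\[
S_{(I)}^{+}=\frac{n}{n+l}S+\frac{l}{n+l}S_{I}+\frac{nl}{(n+l)^{2}}(\bar{\textbf{x}}_{I}-\bar{\textbf{x}})(\bar{\textbf{x}}_{I}-\bar{\textbf{x}})^{T}.
\]
The final step is purely algebraic: rewrite $\frac{n}{n+l}S=S-\frac{l}{n+l}S$ to expose the leading $S$, and regroup the $S$, $S_{I}$, and rank-one contributions to land on the stated bracketed expression with its $\frac{l}{n+l}$ factor and $\frac{l^{2}}{(n+l)^{2}}$ tail.

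The main obstacle I anticipate is exactly this last regrouping. The single rank-one coefficient $\frac{nl}{(n+l)^{2}}$ must be split so that part of it folds into the $\frac{l}{n+l}$ bracket and the remainder is left as the $\frac{l^{2}}{(n+l)^{2}}$ tail, and every sign has to be tracked carefully against the deletion formula, where $n-l$ and $S-S_{I}$ appear in place of $n+l$ and $S_{I}-S$. This sign and coefficient bookkeeping, rather than the scatter decomposition itself, is where arithmetic slips are most likely to creep in.
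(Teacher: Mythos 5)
Your within-group/between-group decomposition of the pooled scatter is sound, and it is essentially the same computation the paper performs by expanding $X_{I_{+}}^{T}X_{I_{+}}$ directly; your version just makes the cross-term cancellation transparent by centring each group at its own mean. Your intermediate formula
\begin{equation*}
S_{(I)}^{+}=\frac{n}{n+l}S+\frac{l}{n+l}S_{I}+\frac{nl}{(n+l)^{2}}(\bar{\textbf{x}}_{I}-\bar{\textbf{x}})(\bar{\textbf{x}}_{I}-\bar{\textbf{x}})^{T}
\end{equation*}
is correct --- it is the standard two-sample pooling identity.

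The gap is precisely in the step you defer as ``purely algebraic'': that regrouping cannot be completed, because your expression is not equal to the stated one. Writing $\frac{n}{n+l}S=S-\frac{l}{n+l}S$ and $\frac{nl}{(n+l)^{2}}=\frac{l}{n+l}-\frac{l^{2}}{(n+l)^{2}}$ turns your formula into
\begin{equation*}
\begin{split}
S_{(I)}^{+}=S+\frac{l}{n+l}\left[(S_{I}-S)+(\bar{\textbf{x}}_{I}-\bar{\textbf{x}})(\bar{\textbf{x}}_{I}-\bar{\textbf{x}})^{T}\right]\\
-\frac{l^{2}}{(n+l)^{2}}(\bar{\textbf{x}}_{I}-\bar{\textbf{x}})(\bar{\textbf{x}}_{I}-\bar{\textbf{x}})^{T},
\end{split}
\end{equation*}
which differs from Eq.~\eqref{eq_si+} in two respects: the two rank-one terms carry the opposite signs, and they are built from the \emph{difference} $\bar{\textbf{x}}_{I}-\bar{\textbf{x}}$ rather than the sum $\bar{\textbf{x}}_{I}+\bar{\textbf{x}}$. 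No rearrangement bridges the two, so you should not expect to ``land on'' the bracketed expression as stated. The discrepancy is not your error: the statement itself (and the paper's own derivation) carries a sign slip at the line where $n\bar{\textbf{x}}\bar{\textbf{x}}^{T}+l\bar{\textbf{x}}_{I}\bar{\textbf{x}}_{I}^{T}-\frac{nl}{n+l}\bigl(\frac{n}{l}\bar{\textbf{x}}\bar{\textbf{x}}^{T}+\cdots\bigr)$ is collapsed --- the coefficients of $\bar{\textbf{x}}\bar{\textbf{x}}^{T}$ and $\bar{\textbf{x}}_{I}\bar{\textbf{x}}_{I}^{T}$ come out as $+\frac{nl}{n+l}$, not $-\frac{nl}{n+l}$, after which the four terms assemble into $(\bar{\textbf{x}}_{I}-\bar{\textbf{x}})(\bar{\textbf{x}}_{I}-\bar{\textbf{x}})^{T}$. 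Three sanity checks confirm your version: a covariance matrix must be invariant under translating all data by a constant vector, which $(\bar{\textbf{x}}_{I}+\bar{\textbf{x}})(\bar{\textbf{x}}_{I}+\bar{\textbf{x}})^{T}$ is not; substituting $l\mapsto -l$ into the (correct) deletion formula for $S_{(I)}^{-}$ reproduces your expression exactly; and setting $l=1$, $S_{I}=0$ in your expression recovers Proposition~\ref{prop2} directly, which also dissolves the paper's remark that the single-addition case ``cannot be obtained trivially'' from Eq.~\eqref{eq_si+}. In short, your argument proves the corrected statement; to finish, state that corrected form as the conclusion rather than forcing the regrouping onto Eq.~\eqref{eq_si+}.
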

The proof of Proposition \ref{prop1} can be found in the Appendix. It is easy to see that this can be matched exactly with the first two orders of the power series expansion. Interestingly, the form of the perturbed covariance matrix in the case of single data addition cannot be obtained trivially by setting $l=1$ in Eq. \eqref{eq_si+}. We show the perturbed form of the covariance matrix for the case of single data addition in Proposition \ref{prop2}, with the proof included in the Appendix.

\begin{proposition}
	\label{prop2}
	The perturbed covariance matrix in the case when $l=1$ can be expressed as
	\begin{equation}
	\begin{split}
	S_{(i)}^{+}=S+\frac{1}{n+1}\left[\left(\bar{\textbf{x}}-\textbf{x}_{i} \right)\left(\bar{\textbf{x}}-\textbf{x}_{i} \right)^{T}-S \right]\\
	-\frac{1}{(n+1)^2}\left(\bar{\textbf{x}}-\textbf{x}_{i} \right)\left(\bar{\textbf{x}}-\textbf{x}_{i} \right)^{T},
	\end{split}
	\end{equation}
	in which $\frac{1}{n+1}$ and $\left[\left(\bar{\textbf{x}}-\textbf{x}_{i} \right)\left(\bar{\textbf{x}}-\textbf{x}_{i} \right)^{T}-S\right]$ correspond to $\epsilon$ and $S^{(1)}$ respectively.
\end{proposition}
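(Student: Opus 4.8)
The plan is to compute $S_{(i)}^{+}$ directly from the definition of the sample covariance matrix of the augmented data set $X\cup\{\textbf{x}_i\}$, \emph{without} appealing to Proposition \ref{prop1}. Indeed the whole point of this proposition is that naively setting $l=1$ in Eq.~\eqref{eq_si+} does not reproduce the correct expression, so the single-addition case must be handled from scratch. The first step is to record how the mean shifts: writing $\bar{\textbf{y}}$ for the mean of the $n+1$ augmented points, a one-line computation gives $\bar{\textbf{y}}=\frac{n\bar{\textbf{x}}+\textbf{x}_i}{n+1}$, and hence the key identity $\bar{\textbf{y}}-\bar{\textbf{x}}=\frac{1}{n+1}(\textbf{x}_i-\bar{\textbf{x}})$. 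It is convenient to abbreviate $\textbf{d}=\textbf{x}_i-\bar{\textbf{x}}$, so that the mean moves by $\textbf{d}/(n+1)$.

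Next I would split the augmented scatter matrix $(n+1)S_{(i)}^{+}=\sum_{\textbf{x}\in X}(\textbf{x}-\bar{\textbf{y}})(\textbf{x}-\bar{\textbf{y}})^{T}+(\textbf{x}_i-\bar{\textbf{y}})(\textbf{x}_i-\bar{\textbf{y}})^{T}$ into the contribution of the original points and that of the added point, recentring each original summand about $\bar{\textbf{x}}$ via $\textbf{x}-\bar{\textbf{y}}=(\textbf{x}-\bar{\textbf{x}})-\textbf{d}/(n+1)$. Expanding the outer product and using $\sum_{\textbf{x}\in X}(\textbf{x}-\bar{\textbf{x}})=0$ to annihilate the two cross terms, the original block collapses to $nS+\frac{n}{(n+1)^{2}}\textbf{d}\textbf{d}^{T}$. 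For the added point $\textbf{x}_i-\bar{\textbf{y}}=\frac{n}{n+1}\textbf{d}$, which contributes $\frac{n^{2}}{(n+1)^{2}}\textbf{d}\textbf{d}^{T}$. Summing the two blocks and dividing by $n+1$ gives $S_{(i)}^{+}=\frac{n}{n+1}S+\frac{n}{(n+1)^{2}}\textbf{d}\textbf{d}^{T}$; since $\textbf{d}\textbf{d}^{T}=(\bar{\textbf{x}}-\textbf{x}_i)(\bar{\textbf{x}}-\textbf{x}_i)^{T}$, this rearranges immediately into the stated form, from which $\epsilon=\frac{1}{n+1}$ and $S^{(1)}=(\bar{\textbf{x}}-\textbf{x}_i)(\bar{\textbf{x}}-\textbf{x}_i)^{T}-S$ can be read off.

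The calculation itself is routine; the only delicate point — and the reason the $l=1$ case merits a separate statement — is the bookkeeping of the two distinct sources of the rank-one term $\textbf{d}\textbf{d}^{T}$: the factor $\frac{n}{(n+1)^{2}}$ arising from recentring the $n$ original points, and the factor $\frac{n^{2}}{(n+1)^{2}}$ arising from the single new point. One must check that these combine to the coefficient $\frac{n}{(n+1)^{2}}=\frac{1}{n+1}-\frac{1}{(n+1)^{2}}$, since it is precisely this split that produces the \emph{negative} second-order coefficient $-\frac{1}{(n+1)^{2}}$ in the claimed expression. This is the sign that a blind substitution $l=1$ into the $l$-point formula gets wrong. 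I would therefore keep the normalisation convention for $S$ fixed throughout (dividing the scatter by the point count), as it is this convention that pins down the factors of $n$ and $n+1$ and hence the final coefficients.
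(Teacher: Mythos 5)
Your proof is correct and takes essentially the same route as the paper: both compute $(n+1)S_{(i)}^{+}$ directly from the definition of the sample covariance of the augmented data and reduce it to $nS+\frac{n}{n+1}\left(\bar{\textbf{x}}-\textbf{x}_{i}\right)\left(\bar{\textbf{x}}-\textbf{x}_{i}\right)^{T}$ before reading off $\epsilon$ and $S^{(1)}$. The only difference is organisational --- you recentre the pointwise scatter sum about the old mean and use $\sum_{\textbf{x}\in X}(\textbf{x}-\bar{\textbf{x}})=\textbf{0}$, whereas the paper expands the Gram matrix $X_{i_{+}}^{T}X_{i_{+}}$ and the rank-one mean correction in matrix form --- and your bookkeeping of the two rank-one contributions is, if anything, cleaner.
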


Using the above expression for the perturbed covariance matrix and the results in Eq. \eqref{eq_pert}, we express the first order approximation of the $k$th perturbed eigenvalue for $l=1$ as
\begin{equation}
\begin{aligned}
\lambda_{k}(\epsilon)&=\lambda_{k}+\epsilon\lambda_{k}^{(1)}+\mathcal{O}(\epsilon^{2})\\
&\approx\lambda_{k}+\epsilon \textbf{v}_{k}^{T}S^{(1)}\textbf{v}_{k}\\
&= \lambda_{k}+\frac{1}{n+1} \textbf{v}_{k}^{T}\left(\left(\bar{\textbf{x}}-\textbf{x}_{i} \right)\left(\bar{\textbf{x}}-\textbf{x}_{i} \right)^{T}-S \right)\textbf{v}_{k}\\
&=\frac{1}{n+1}\alpha_{ki}^2+\frac{n}{n+1} \lambda_{k},
\end{aligned}
\end{equation}  
where $\alpha_{ki}=\textbf{v}_{k}^{T}(\textbf{x}_{i}-\bar{\textbf{x}})$ as before. Hence, the change in the reconstruction error for cluster $k^{*}_{s}$ after the addition of $\textbf{x}_{s}$ can be expressed as

\begin{equation}
\begin{aligned}
U_{2}(\textbf{x}_{s},V_{k^{*}_{s}})
&=\sum_{\textbf{x}\in\mathcal{X}_{k^{*}_{s}}\cup\left\{\textbf{x}_{s} \right\}}L(\textbf{x},\tilde{V}_{k^{*}_{s}})-\sum_{\textbf{x}\in\mathcal{X}_{k^{*}_{s}}}L(\textbf{x},V_{k^{*}_{s}})\\
&=\sum_{k=q+1}^{P}\left(\lambda_{k}(\epsilon)-\lambda_{k}\right)\\
&=\sum_{k=q+1}^{P}\frac{\alpha_{ki}^{2}-\lambda_{k}}{n+1}.
%\left\{\alpha_{ki}^2-\frac{2n+1}{n+1}\lambda_{k} \right\}.
\end{aligned}
\end{equation} 
%In the following figure, we plot the difference in reconstruction error before and after a data object is deleted for the example shown. We compare them between the true values and the estimated values approximated by power series as detailed above.  
%\begin{figure}
%	\centering
%	\includegraphics[trim={3.5cm 9cm 3.5cm 8cm}, height=.35\textwidth, width=.32\textwidth]{plots/2lines_assignment.pdf}
%	\includegraphics[trim={3.5cm 9cm 3.5cm 8cm}, height=.35\textwidth, width=.32\textwidth]{plots/data_deletion.pdf}
%	\includegraphics[trim={3.5cm 9.5cm 3.5cm 8.5cm}, height=.35\textwidth, width=.32\textwidth]{plots/data_addition.pdf}
%	\caption{Example of data generated in 2-$D$, coloured in assignment given by KSC (left). The true and estimated effects of data deletion (middle) and data addition (right).}
%\end{figure}

Using the perturbation analysis results, the influence of data addition and deletion can be calculated directly after $K$ SVD computations per iteration. This means we only need $(T\cdot~K)$ SVD computations for all $T$ iterations as compared to $\mathcal{O}(T\cdot~N^{2})$.

\subsection{Update Procedure}
% What is the update procedure?
After the class memberships of some points are queried, we will know the pairwise must-link and cannot-link relationships among them. However, we do not know to which cluster labels we should assign each of these points to. 
The next step is to update the subspace model under the grouping constraints. That is, the queried points that belong to the same class must be assigned to the same cluster label. Additionally, the queried points that do not belong to the same class should be assigned different cluster labels. 

% How to update the subspace model 
We can naturally extend KSC into an iterative constrained clustering algorithm with three stages. The first two stages involve the estimation of subspace bases and the cluster assignment of each point to the closest subspace. In the third stage, we satisfy the grouping constraints as mentioned above. This gives us a new constrained clustering objective, which we can divide into two parts.

% Present the combined objective
For the set of unlabelled data $\mathcal{X}_{U}$, the subspace clustering objective is to minimise
\begin{equation}
L(\mathcal{X}_{U},\mathcal{V}) = \sum_{\textbf{x}_{u}\in\mathcal{X}_{U}}\left\{\min_{m\in\left\{1,\ldots,K \right\}}\left\|\textbf{x}_{u}-V_{m}V_{m}^{T}\textbf{x}_{u} \right\|_{2}^{2} \right\},
\label{eq_obj1}
\end{equation}
where $V_{m}$ is a $P\times q$ basis matrix that is determined by the points that are currently allocated to subspace $m$. 
% where $\mathcal{V}$ is the set of all bases matrices for all subspaces that contain both queried and unqueried data. 
Note that the basis matrix of the $m$th cluster $V_{m}$ is determined by points that are both labelled and unlabelled.

For the set of labelled data $\mathcal{X}_{L}$, we need to both minimise the reconstruction error without violating any of the grouping constraints. 
%
%This means that points with the same label must be assigned to a single cluster and furthermore, that no cluster contains points with different labels.
%
Among $K$ groups of queried points, there are $K!$ ways of matching each group to a unique cluster label. This is a combinatorial optimisation problem, and we can denote as $\mathcal{P}(K)$ the set of all possible permutations. Let $P_{n}$ be one realisation of $\mathcal{P}(K)$ that contains $K$ unique assignment labels to be matched with the queried points, and $P_{nl}$ be the assigned cluster label in the $n$th permutation that corresponds to true class $l$. 
Then we can write the subspace clustering objective for the labelled data $\mathcal{X}_{L}$ as 
\begin{equation}
L(\mathcal{X}_{L},\mathcal{V}) = \min_{\substack{P_{n}\in\mathcal{P}(K)\\n\in\left\{1,\ldots,K! \right\}}}\left\{\sum_{l=1}^{K} \left\|X_{l}-V_{P_{nl}}V_{P_{nl}}^{T}X_{l}  \right\|_{2}^{2} \right\},
\label{eq_updatel}
\end{equation}
where $X_{l}$ is a $n_{l}\times P$ matrix that contains the $n_{l}$ queried points from class $l$.

When $K$ is small, it is easy to simply evaluate all $K!$ permutations and choose the one with the smallest overall cost. However as the number of clusters grows, it is computationally prohibitive to evaluate all combinatorial possibilities. 
It is also known as the \emph{minimum weight perfect matching problem}, which can be solved in polynomial time through the Hungarian algorithm \cite{kuhn1955hungarian}. 
We first construct a $K$ by $K$ cost matrix $P$ in which the $(n,l)$-th entry $P_{nl}$ denotes the total reconstruction error of allocating data from class $n$ to cluster label $l$. In this algorithm, the computational cost is upper bounded by $\mathcal{O}(K^3)$. We adopt the Hungarian algorithm as an alternative approach to exhaustive search to our problem in stage 3 when $K!$ is larger than $K^{3}$.

To combine both the unlabelled and labelled objectives together, we can write a constrained objective as
%Note that given a certain permutation $P_{n}$, the basis matrix is the same for all data $X_{l}$ in the same class $l$ ($l=1,\ldots,K$). 
\begin{equation}
\begin{split}
g(\mathcal{X},\mathcal{V})=\sum_{\textbf{x}_{u}\in\mathcal{X}_{U}}\left\{\min_{m\in\left\{1,\ldots,K \right\}}\left\|\textbf{x}_{u}-V_{m}V_{m}^{T}\textbf{x}_{u} \right\|_{2}^{2} \right\}\\
+\min_{\substack{P_{n}\in\mathcal{P}(K),\\n\in\left\{1,\ldots,K! \right\}}}\left\{\sum_{l=1}^{K} \left\|X_{l}-V_{P_{nl}}V_{P_{nl}}^{T}X_{l}  \right\|_{2}^{2} \right\}.
\end{split}
\label{eq_kscc_obj}
\end{equation}
The procedural form of \emph{KSC with Constraints (KSCC)} is detailed in Algorithm \ref{algo_update}. 
This three-stage procedure ensures that the constrained subspace clustering objective decreases monotonically while satisfying all grouping constraints. A detailed proof for this can be found in Theorem \ref{thm_1}.

\begin{theorem}\label{thm_1}
	The $K$-subspace clustering with constraints (KSCC) algorithm decreases the objective in Eq. \eqref{eq_kscc_obj} monotonically throughout iterations.
\end{theorem}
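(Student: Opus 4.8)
The plan is to recognise the three stages of KSCC as block coordinate descent on an augmented version of the objective in Eq.~\eqref{eq_kscc_obj} in which the cluster assignments are made explicit, and then to show that each stage fails to increase this augmented objective. To this end I would introduce
\begin{equation*}
\tilde{g}\!\left(\mathcal{V},\{m_u\},P_n\right)=\sum_{\textbf{x}_u\in\mathcal{X}_U}\left\|\textbf{x}_u-V_{m_u}V_{m_u}^{T}\textbf{x}_u\right\|_2^2+\sum_{l=1}^{K}\left\|X_l-V_{P_{nl}}V_{P_{nl}}^{T}X_l\right\|_2^2,
\end{equation*}
where $m_u\in\{1,\ldots,K\}$ is the explicit cluster label of the unlabelled point $\textbf{x}_u$ and $P_n\in\mathcal{P}(K)$ is the permutation matching labelled classes to clusters. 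By construction $g(\mathcal{X},\mathcal{V})=\min_{\{m_u\},P_n}\tilde{g}(\mathcal{V},\{m_u\},P_n)$, so it suffices to show that one full iteration of KSCC does not increase $\tilde{g}$ while keeping the assignment optimal for the current bases at the end of each iteration.

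Denoting the state at the start of iteration $t$ with a superscript, I would treat the three stages in order. In the \emph{subspace estimation} stage the assignment $(\{m_u\}^{(t)},P_n^{(t)})$ is frozen and each $V_m$ is recomputed via the eigendecomposition in Eq.~\eqref{eigdecomp} on exactly the points, labelled and unlabelled, currently assigned to cluster $m$. The key fact is that, for a fixed point set, the top-$q$ eigenvectors of the covariance matrix minimise the total reconstruction error over all orthonormal $V_m\in\mathbb{R}^{P\times q}$ (Eckart--Young--Mirsky, i.e.\ the equivalence between reconstruction error and discarded eigenvalues already used earlier in the paper). Since $\tilde{g}$ decouples into a sum of per-cluster reconstruction errors once the assignment is fixed, minimising each term separately yields $\tilde{g}(\mathcal{V}^{(t+1)},\{m_u\}^{(t)},P_n^{(t)})\le\tilde{g}(\mathcal{V}^{(t)},\{m_u\}^{(t)},P_n^{(t)})$.

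Next, in the \emph{cluster assignment} stage the bases $\mathcal{V}^{(t+1)}$ are held fixed and each unlabelled point is moved to the subspace of smallest reconstruction error; this minimises the first sum term-by-term and leaves the second sum untouched, so $\tilde{g}$ does not increase. Finally, in the \emph{constraint-satisfaction} stage $\mathcal{V}^{(t+1)}$ and $\{m_u\}^{(t+1)}$ are fixed and $P_n$ is chosen to minimise the second sum over all feasible permutations; whether this is done by exhaustive search over the $K!$ permutations or by the Hungarian algorithm, the returned $P_n^{(t+1)}$ is a global minimiser over the feasible set, which simultaneously encodes the must-link and cannot-link constraints, so again $\tilde{g}$ does not increase. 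Chaining the three inequalities gives $\tilde{g}(\mathcal{V}^{(t+1)},\{m_u\}^{(t+1)},P_n^{(t+1)})\le\tilde{g}(\mathcal{V}^{(t)},\{m_u\}^{(t)},P_n^{(t)})$; because the last two stages render the assignment optimal for $\mathcal{V}^{(t+1)}$, the left-hand side equals $g(\mathcal{X},\mathcal{V}^{(t+1)})$ while the right-hand side equals $g(\mathcal{X},\mathcal{V}^{(t)})$, which is the claimed monotonic decrease; boundedness below by zero then also gives convergence.

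The step I expect to be the main obstacle is the bookkeeping around the fact that each $V_m$ depends on both labelled and unlabelled points. One must argue carefully that the block optimised during the assignment stage consists solely of the assignment variables, with the bases frozen, so that this interdependence does not create a circular argument that invalidates the descent inequality; the bases need not be optimal at that moment, only held fixed. A secondary point requiring care is verifying that the feasible set in stage three is exactly the set of permutations $\mathcal{P}(K)$, so that the exhaustive-search and Hungarian-algorithm solutions genuinely enforce the grouping constraints while still attaining the global minimum over that set.
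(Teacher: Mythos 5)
Your proposal is correct and follows essentially the same route as the paper: the paper's proof is precisely the chain of three inequalities corresponding to subspace re-estimation, nearest-subspace reassignment of the unlabelled points, and re-matching of the labelled classes, each step being a coordinate-wise minimisation with the other blocks frozen. Your explicit introduction of the augmented objective $\tilde{g}$ with assignment variables merely formalises what the paper leaves implicit (namely that the displayed sums equal $g(\mathcal{X},\mathcal{V}^{(t)})$ because the assignments entering iteration $t$ are already optimal for $\mathcal{V}^{(t)}$), so there is no substantive difference.
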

\begin{proof}
	Our proof borrows ideas from the proof for the monotonicity of the $K$-means algorithm. In the first iteration, we have as input a set of cluster assignment $\Omega^{(1)}=\left\{k_{1}^{(1)},\ldots,k_{N}^{(1)} \right\}$, the set of unlabelled data $\mathcal{X}_{U}$ and labelled data $\mathcal{X}_{L}$. In the first iteration, we can calculate a set of bases matrices $\mathcal{V}^{(1)}$ for all subspaces. Let $g(\mathcal{X}, \mathcal{V}^{(1)})$ be the combined reconstruction error at iteration 1, then at iteration $t$ ($t=1,\ldots,T$) we have
	%
	%\newline
	
	\begin{equation}
	\begin{aligned}
	g(\mathcal{X}, \mathcal{V}^{(t)})&=
	\sum_{\textbf{x}_{u}\in\mathcal{X}_{U}}\left\|\textbf{x}_{u}-V_{k_{u}^{(t)}}^{(t)}[V_{k_{u}^{(t)}}^{(t)}]^{T}\textbf{x}_{u} \right\|_{2}^{2}\\&\hspace{1.8em}+\sum_{l=1}^{K} \left\|X_{l}-V_{P_{nl}^{(t)}}^{(t)}[V_{P_{nl}^{(t)}}^{(t)}]^{T}X_{l}  \right\|_{2}^{2}\\
	&\geq \hspace{.5em}\sum_{\textbf{x}_{i}\in\mathcal{X}} \left\|\textbf{x}_{i}-V_{k_{i}^{(t)}}^{(t+1)}[V_{k_{i}^{(t)}}^{(t+1)}]^{T}\textbf{x}_{i} \right\|_{2}^{2}\\
	&=\sum_{\textbf{x}_{u}\in\mathcal{X}_{U}}\left\|\textbf{x}_{u}-V_{k_{u}^{(t)}}^{(t+1)}[V_{k_{u}^{(t)}}^{(t+1)}]^{T}\textbf{x}_{u} \right\|_{2}^{2}\\&\hspace{1.8em}+\sum_{l=1}^{K} \left\|X_{l}-V_{P_{nl}^{(t)}}^{(t+1)}[V_{P_{nl}^{(t)}}^{(t+1)}]^{T}X_{l}  \right\|_{2}^{2}\\
	&\geq\sum_{\textbf{x}_{u}\in\mathcal{X}_{U}}\left\|\textbf{x}_{u}-V_{k_{u}^{(t+1)}}^{(t+1)}[V_{k_{u}^{(t+1)}}^{(t+1)}]^{T}\textbf{x}_{u} \right\|_{2}^{2}\\&\hspace{1.8em}+\sum_{l=1}^{K} \left\|X_{l}-V_{P_{nl}^{(t)}}^{(t+1)}[V_{P_{nl}^{(t)}}^{(t+1)}]^{T}X_{l}  \right\|_{2}^{2}\\
	&=\sum_{\textbf{x}_{u}\in\mathcal{X}_{U}}\left\|\textbf{x}_{u}-V_{k_{u}^{(t+1)}}^{(t+1)}[V_{k_{u}^{(t+1)}}^{(t+1)}]^{T}\textbf{x}_{u} \right\|_{2}^{2}\\&\hspace{1.8em}+\sum_{l=1}^{K} \left\|X_{l}-V_{P_{nl}^{(t+1)}}^{(t+1)}[V_{P_{nl}^{(t+1)}}^{(t+1)}]^{T}X_{l}  \right\|_{2}^{2}\\
	&=g(\mathcal{X}, \mathcal{V}^{(t+1)}).
	\end{aligned}
	\end{equation}
\end{proof} 
The first line of the proof says that, at iteration $t$ we have a set of cluster assignments for the unlabelled data $\Omega_{U}^{(t)}$ and for the labelled data $\Omega_{L}^{(t)}$ that satisfies all constraints imposed upon knowing the true classes of the points in $\mathcal{X}_{L}$. When we proceed into the next step of updating the set of bases $\mathcal{V}^{(t+1)}$ at iteration $t+1$, the new set of bases minimise the reconstruction error within each cluster of points given the assignment $\Omega^{(t)}$ (as stated in the second and third lines of the proof). Next in the assignment update stage for the unlabelled data, we obtain the fourth line of the proof. It says that the assignment $k_{u}^{(t+1)}$ in the $(t+1)$th iteration would only be different from $k_{u}^{(t)}$ if it gives a smaller reconstruction error for $\textbf{x}_{u}$. Finally in the last step of the KSCC algorithm, we update the matching between the cluster assignment and true classes, and it only gets updated if some other matching has a smaller overall reconstruction error for the labelled data $\mathcal{X}_{L}$. This is reflected in the last two lines of the proof. 

\begin{table*}[h!]
	\begin{center}
		\begin{tabular}{ |c|c|c|c|c|c|c|c| } 
			\hline
			Parameters & SCAL & SCAL-A & SCAL-D&MaxResid&MinMargin&Random\\
			\hline
			$\sigma=0.2$ &\textbf{0.30\%}&0.40\%&45.20\%&19.20\%&0.70\%&23.00\%\\
			$\sigma=0.4$  &\textbf{43.10\%}&46.10\%&99.00\%&98.00\%&83.10\%&99.50\%\\  
			$\sigma=0.6$  &85.60\% &\textbf{85.40\%}  &99.90\%&99.10\%&89.50\%&99.50\%\\
			\hline
			\hline
			$\theta=30$ &\textbf{41.67\%}&44.17\%&98.67\%&99.83\%&96.00\%&99.00\%\\
			$\theta=50$ &37.17\%&\textbf{36.83\%}&99.00\%&98.17\%&69.50\%&99.50\%\\  
			$\theta=70$ &32.17\%&\textbf{31.83\%}&98.83\%&98.50\%&77.67\%&99.83\%\\
			%USPS & 9298&256&10&7&&\\ 
			%			\hline
			%			\hline
			%			$q=2$ &\textbf{47\%} & 57\% &95\%&62\% &95\%& 87\%\\
			%			$q=4$ &\textbf{28\%} &52 \% &82\% &42\%&82\%&70\%\\
			%			$q=6$ &\textbf{3\%} &7 \% &78\%&10\%&77\%&43\%\\
			%			\hline
			%			\hline
			%			$K=3$ &\textbf{52\%} &53 \% &100\%&93\%&100\%&100\%\\
			%			$K=4$ &\textbf{57\%} &65 \% &100\%&100\%&100\%&100\%\\ 
			%			$K=5$ &59\% &\textbf{58 \%} &100\%&99\%&100\%&99\%\\
			\hline
		\end{tabular}
	\end{center}
	\caption{The percentage of points queried before perfect cluster performance is reached on synthetic data sets.}
	\label{synthetic_data}
\end{table*}

\begin{algorithm}[h!]
	\caption{KSC with Constraints~(KSCC)}
	\DontPrintSemicolon
	\SetAlgoLined
	\SetKwInOut{Input}{Input}
	\SetKwInOut{Output}{Output}
	\Input{Labelled and unlabelled data $\mathcal{X}_{L}$, $\mathcal{X}_{U}$ \\
		Initial cluster assignment $\left\{k_{1},\ldots,k_{N} \right\}$\\
		subspace dimension $q$}
	\BlankLine
	\Repeat{\text Iteration number $T$ is reached or the total reconstruction error stops decreasing}{
		\% \emph{Stage 1: fitting subspaces}\;
		\For{$X_{k}\in\mathcal{X}$ ($k=1,\ldots,K$)}{
			Calculate the eigen-decomposition on the covariance matrix of $X_{k}$:\\
			$\text{cov}(X_{k})=V_{k}\Lambda_{k} V_{k}^{T}$ \;
			% The columns of $V_{k}$ define a $q$-dimensional subspace \;
		}
		\% \emph{Stage 2: updating assignment}\;
		\For{$\textbf{x}_{i}\in\mathcal{X}$ ($i=1,\ldots, N$)}{
			Determine the cluster assignment for $\textbf{x}_{i}$:\\
			$k_{i}=\arg\min_{k_{i}\in\left\{1,\ldots,K \right\}}\left\|\textbf{x}_{i}-V_{k}V_{k}^{T}\textbf{x}_{i} \right\|_{2}^{2}$
		}
		\% \emph{Stage 3: satisfying constraints}\;
		Find the best vector $P_{n}^{*}\in\mathcal{P}(K)$ to match with the true classes by solving Eq. \eqref{eq_updatel}:
		$$P_{n}^{*} =\arg \min_{\substack{P_{n}\in\mathcal{P}(K),\\n\in\left\{1,\ldots,K! \right\}}}\left\{\sum_{l=1}^{K} \left\|X_{l}-V_{P_{nl}}V_{P_{nl}}^{T}X_{l}  \right\|_{2}^{2} \right\}$$
		using the Hungarian algorithm\;
		\BlankLine
	}
	
	\label{algo_update}
\end{algorithm}

\section{Experimental Results} \label{al_er}
In this section, we conduct a series of experiments with both synthetic and real data to evaluate the performance of our proposed active learning strategies against three other competing strategies. The cluster performance is measured by the well-known normalised mutual information (NMI) \cite{cover2012elements}.
%On synthetic data, we inspect the effects of different levels of additive noise in the data on the performance of various strategies. Additionally we examine the effects of the between-cluster angles on the rate of improvement. 

%Finally, we experiment with varying number of clusters $K$ and subspace dimensions $q$ under a fixed noise level. 

% We investigate the following aspects on synthetic data. 

%Experiments are conducted on both linear and affine subspaces. 
In order to inspect the influence of data addition and deletion separately, we use three versions of our proposed active learning strategy: \emph{SCAL-A} and \emph{SCAL-D} only take into account the influence of data addition or deletion respectively, and \emph{SCAL} is the combined strategy. We compare the performance of our proposed active learning strategies with three alternative schemes: \emph{MaxResid}, \emph{MinMargin}  \cite{lipor2015margin}, and \emph{Random} strategy. \emph{MaxResid} selects data that have the largest reconstruction error to their corresponding subspaces. \emph{MinMargin} selects the data that are most equidistant to their two closest subspaces. Lastly as a benchmark, we compare to random sampling and satisfy the constraints using the proposed KSCC algorithm.

% For experiments on real data, only the combined strategy \emph{SCAL} is used to compare with other competing strategies.

\subsection{Synthetic data}
For all synthetic experiments, we initialise the cluster assignments with the best initialisation (the one with the lowest reconstruction error) out of 50 runs of the KSC algorithm. We set the total number of iterations $T$ of the active learning procedure to be $N$, and one point is queried at a time. 

%\begin{figure*}
%	\centering
%%	\includegraphics[trim={3.5cm 9cm 3.5cm 8.5cm}, height=.35\textwidth, width=.3\textwidth]{plots/2d_30deg_plot.pdf}
%%	\includegraphics[trim={3.5cm 8.7cm 3.5cm 8.3cm}, height=.35\textwidth, width=.3\textwidth]{plots/2d_45deg_plot.pdf}
%%	\includegraphics[trim={3.5cm 9cm 3.5cm 8.5cm}, height=.35\textwidth, width=.3\textwidth]{plots/2d_120deg_plot.pdf}
%%	\vspace{.1cm}
%	
%\end{figure*}

%The cluster performance at the first iteration of the active learning procedure is simply the performance of the initialisation assignment, which is the one that gives the minimum reconstruction error out of all 50 runs.

%\begin{figure*}
%	\centering
%%	\includegraphics[trim={3.5cm 9cm 3.5cm 8.5cm}, height=.35\textwidth, width=.3\textwidth]{plots/k3_plot.pdf}
%%	\includegraphics[trim={3.5cm 9cm 3.5cm 8.5cm}, height=.35\textwidth, width=.3\textwidth]{plots/k4_plot.pdf}
%%	\includegraphics[trim={3.5cm 9cm 3.5cm 8.5cm}, height=.35\textwidth, width=.3\textwidth]{plots/k5_plot.pdf}
%%	\vspace{.1cm}
%	
%\end{figure*}

\begin{figure*}[h!]
	\centering	
	\includegraphics[height=.25\textwidth, width=.32\textwidth]{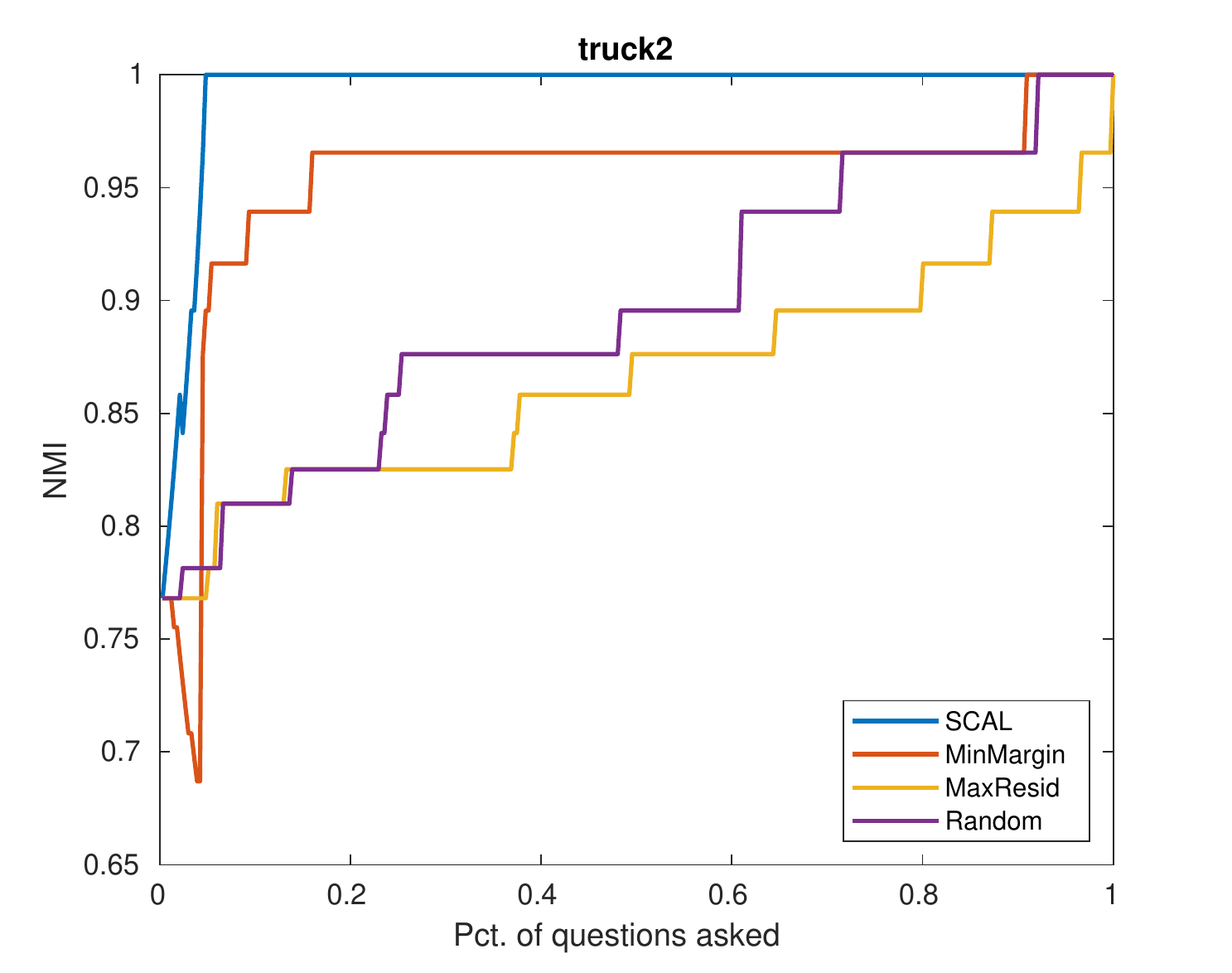}
	\includegraphics[height=.25\textwidth, width=.32\textwidth]{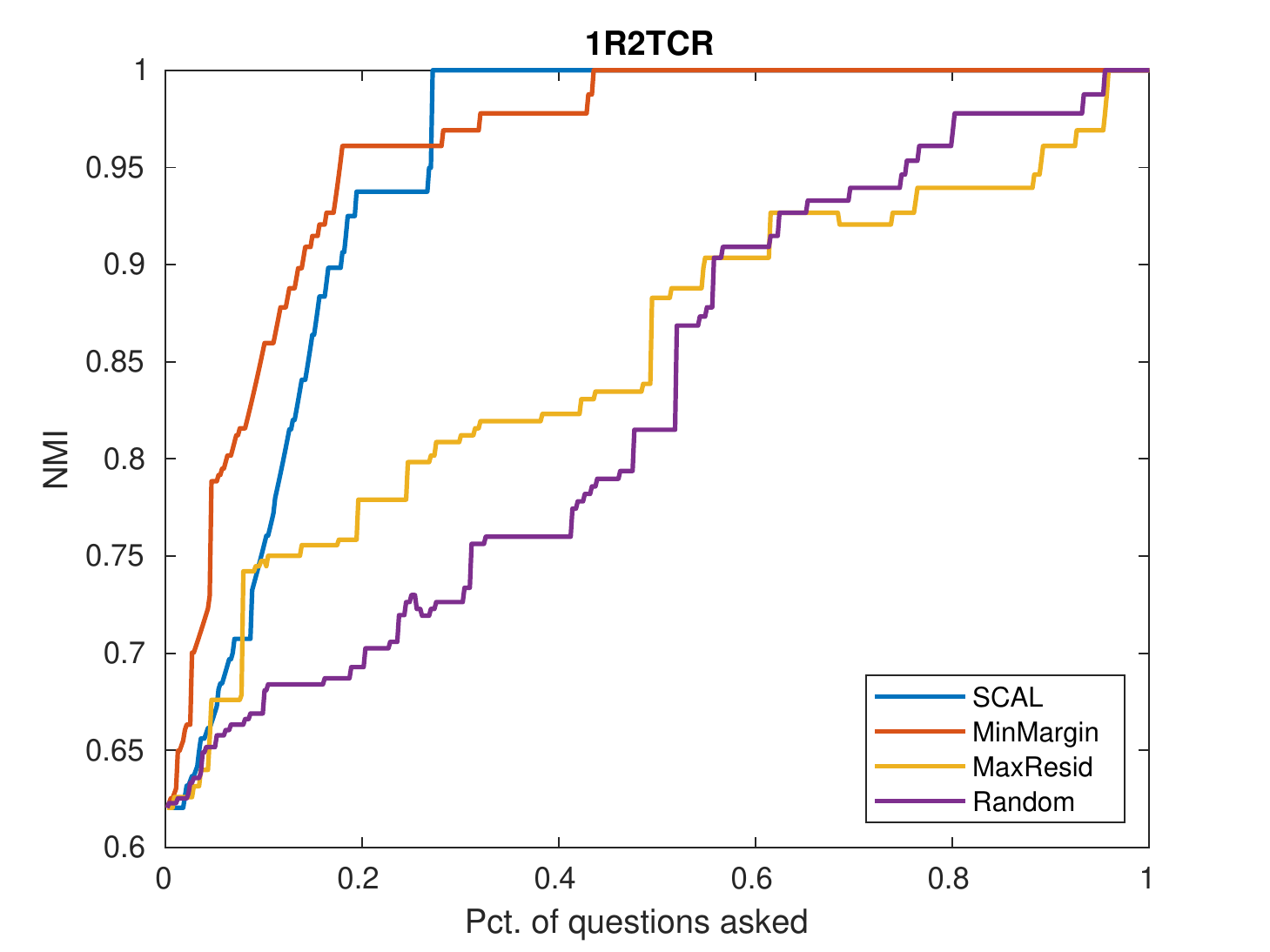}
	\includegraphics[height=.25\textwidth, width=.32\textwidth]{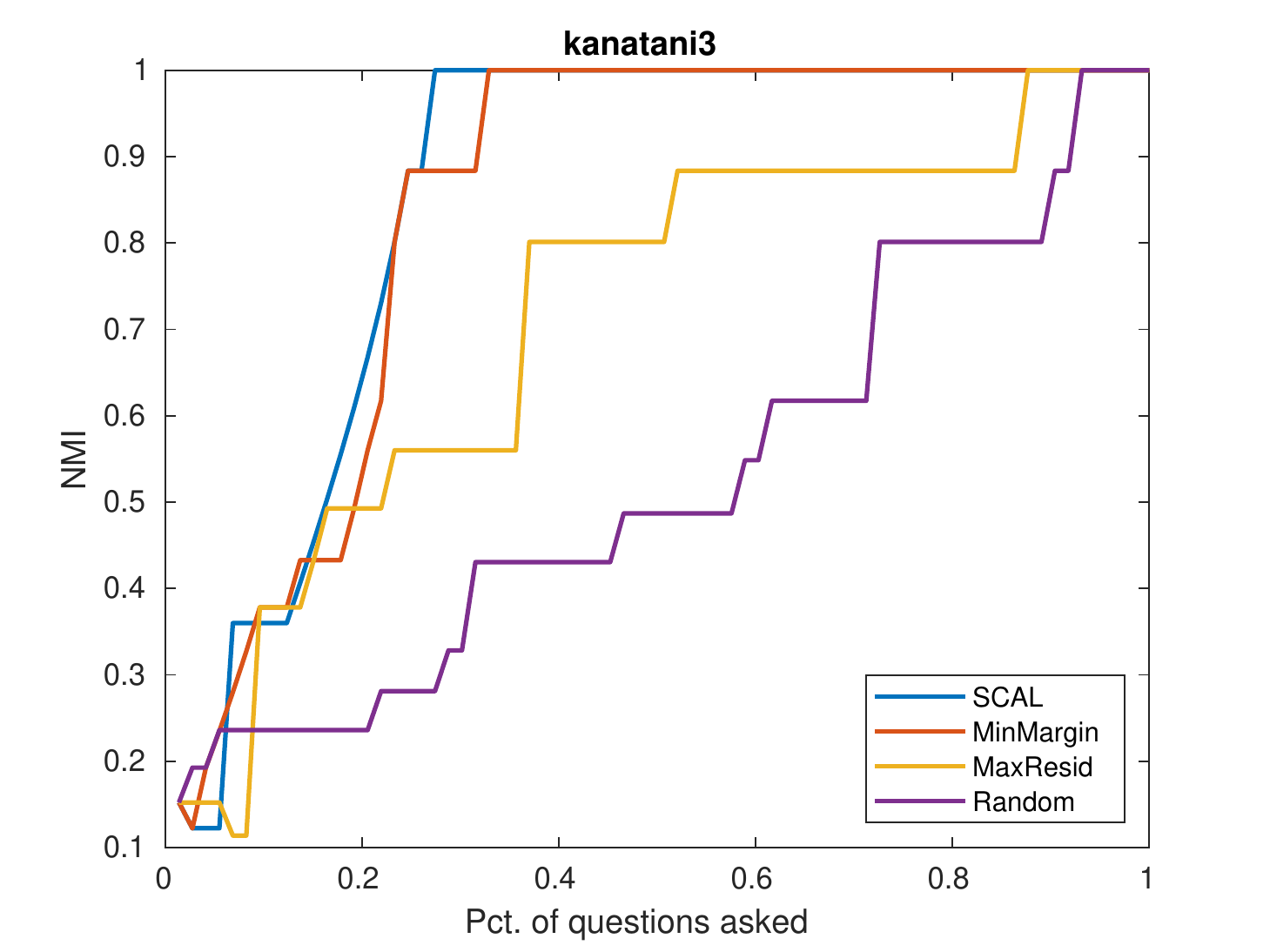}
	
	\includegraphics[height=.25\textwidth, width=.32\textwidth]{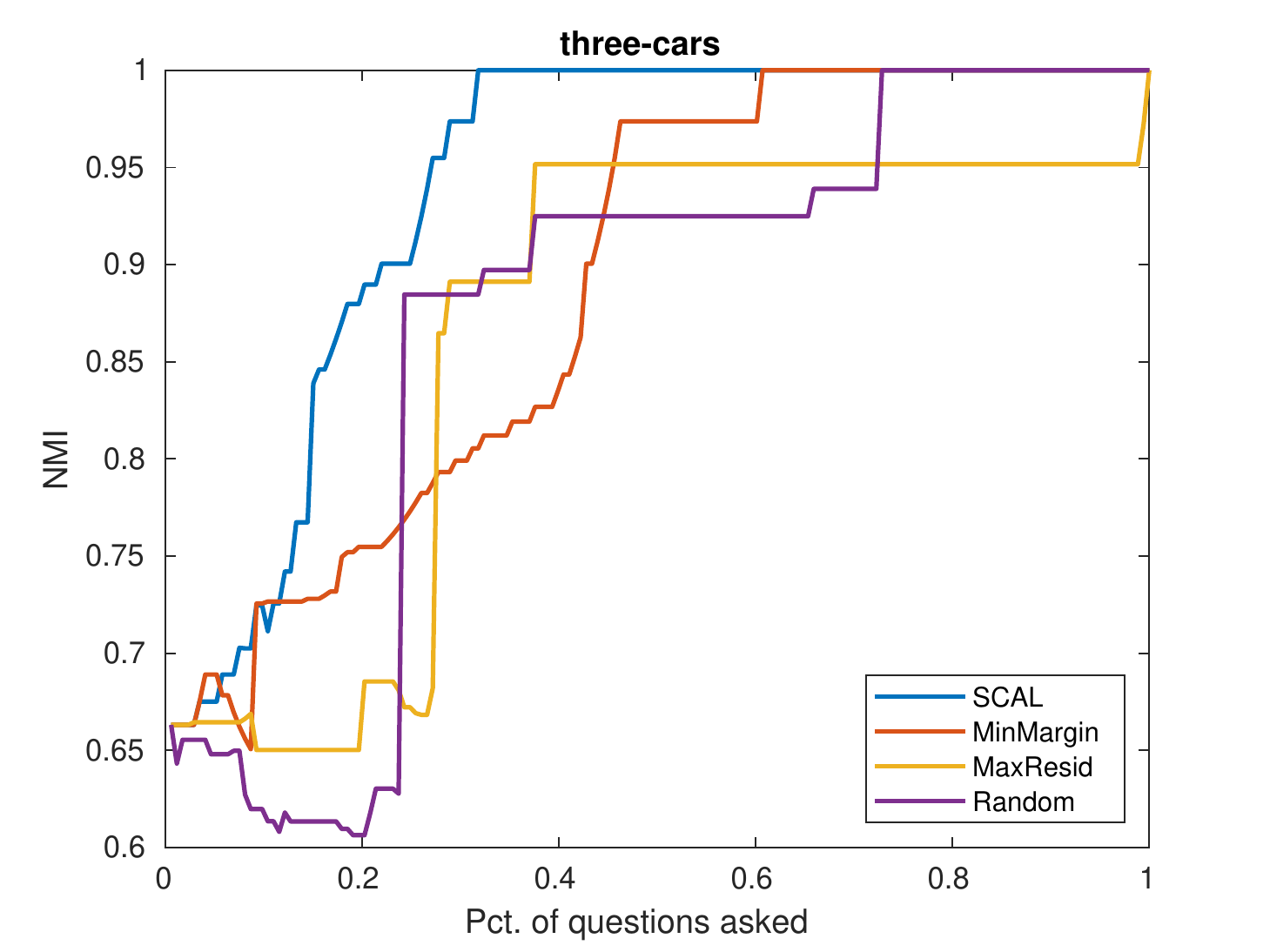}
	\includegraphics[height=.25\textwidth, width=.32\textwidth]{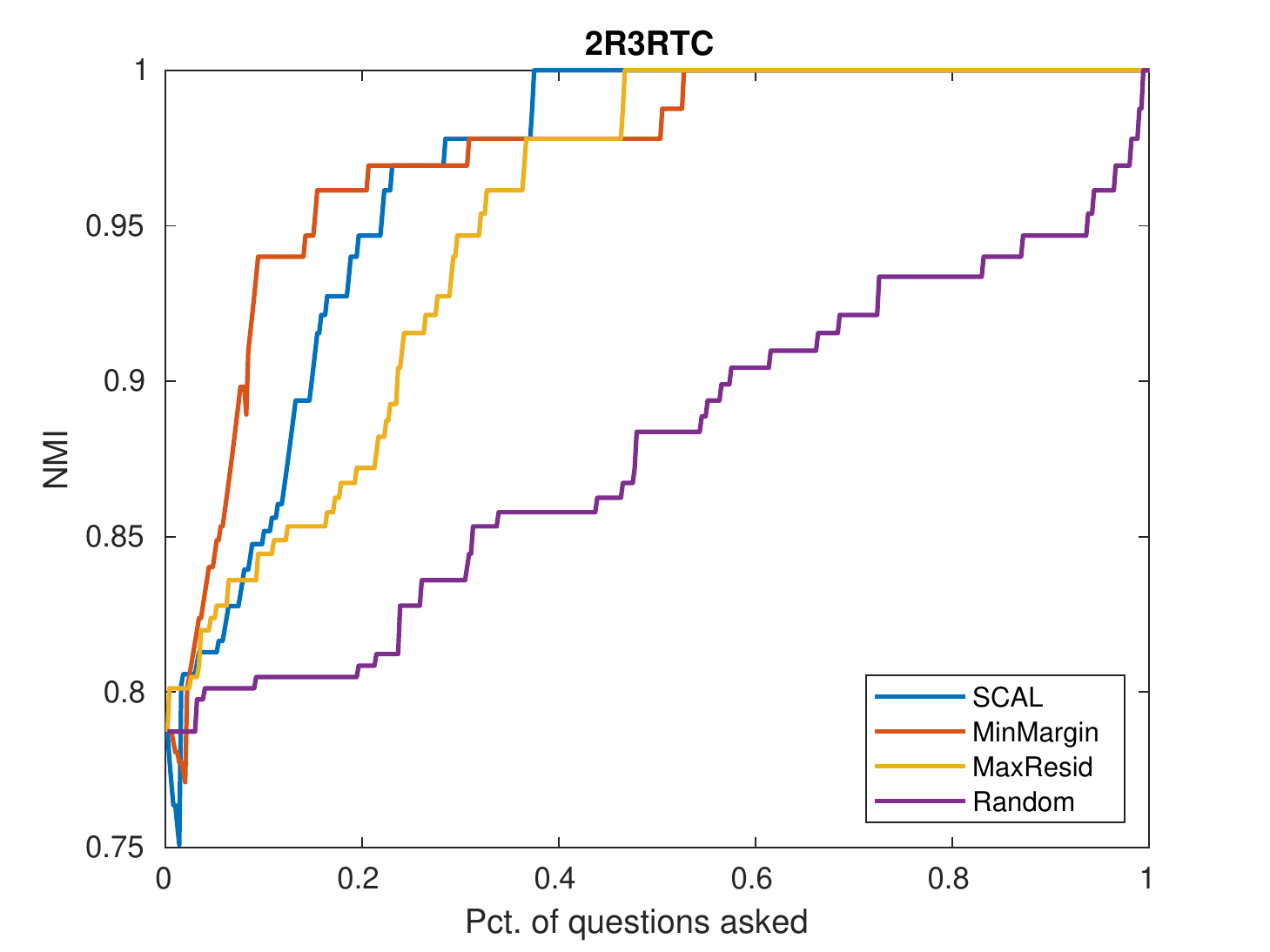}
	\includegraphics[height=.25\textwidth, width=.32\textwidth]{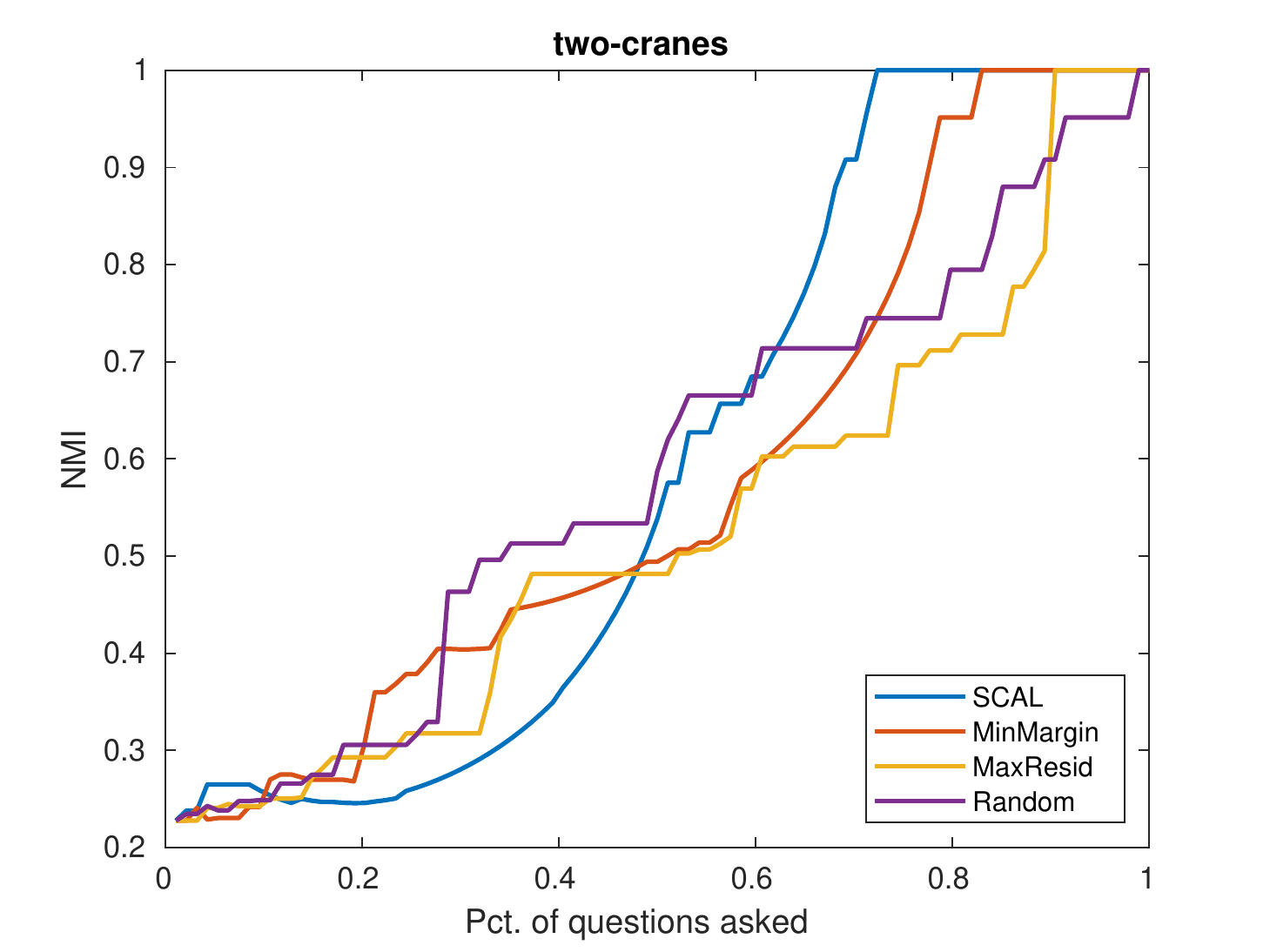}
	\caption{Performance results measured by NMI on six motion segmentation data sets with KSC initialisation.}
	\label{motion_figs}
	% results for 'truck2' and '1r2tcr' are produced on 21 Oct. see work1021.m 
\end{figure*}
\textbf{Experiment 1: Varying noise level ($\sigma$).} 
We investigate the effectiveness of our proposed active learning strategy under varying levels of additive noise. The effectiveness of various strategies are compared under various levels of additive noise. The data corrupted by noise can be expressed as $Y = X+E$, where $X$ is the noise-free data and $E$ the noise component. The noise-free data matrix $X$ are generated from the standard Normal distribution with $N(0,1)$ along their bases directions. Each entry in the noise data matrix $E$ is generated from standard Normal distribution $N(0,\sigma^{2})$, with zero mean and variance $\sigma^{2}$. We add additive noise levels of $\sigma=0.2$, 0.4, and 0.6 respectively. Across all noise levels, there are 5 clusters in each data set and each cluster contains 200 points from the same subspace of dimension 10 out of the full dimension 20.

The performance of various strategies under all settings are shown in Table \ref{synthetic_data}. As a general trend, all strategies require high proportion of points to be queried to reach perfection as the noise level goes up. It seems most of the advantage of \emph{SCAL} comes from the influence of data addition \emph{SCAL-A}, and it is difficult to say there is a difference between \emph{SCAL} and \emph{SCAL-A}. It is worth noting that \emph{MinMargin} has similar performance to \emph{SCAL} and \emph{SCAL-A} when $\sigma=0.2$.

%, and the initial cluster performance decreases as the increase of noise level which is within our expectation. 

% That is, each noise entry $e_{ij} \sim N(0,\text{sd}(X)\times \sigma)$, in which $i=1,\ldots, N$ and $j=1,\ldots, P$. 

\textbf{Experiment 2: Varying angles between subspaces ($\theta$).} In order to fix a vector to rotate the subspaces, we apply various active learning strategies on three 3-$D$ examples with subspace dimension $q=2$. 600 points are generated in total from 3 classes, and every cluster contains 200 points each. Noise with $\sigma=0.1$ is added to the data, and the between-subspace angle is specified to be 30, 50, and 70 degrees respectively. 

The performance results under all scenarios are shown in Table \ref{synthetic_data}. Our proposed active strategy \emph{SCAL} and \emph{SCAL-A} outperform all other strategies significantly. For these two strategies, the proportion of data needed in order to achieve perfection decreases as the between-subspace angle increases. Other strategies have to query almost all points in order to achieve perfect performance apart from \emph{MinMargin}, which is our close competitor in the varying noise setting.

Again \emph{SCAL-D} strategy as part of our proposed \emph{SCAL} strategy barely distinguishes itself from \emph{Random} strategy. This is within our expectations for two reasons. First, misclassified points are most likely to belong to the nearest cluster that they have the second least reconstruction error to. Secondly, those points whose deletion has a large influence on their allocated subspaces are likely to be correctly classified in the first place due to the noise in the data. 

%Therefore in such situations, we should adopt an active strategy that takes into account mainly the influence of data addition.

%\textbf{Experiment 3: varying subspace dimensions ($q$).} We proceed to evaluate the influence of subspace dimensions. We assume that all subspaces have the same dimensionality $q$, in which $q$ varies from 2 to 6. The full data dimensionality is 10 for all settings, and the noise level $\sigma$ is fixed to be 0.1. The data has three clusters, and each cluster contains 200 data objects. Based on the results as shown in Table \ref{synthetic_data}, we observe that the proportion of queries needed is much smaller as the subspace dimension $q$ increases. Note that when $q=6$, less than half of the data need to be queried to reach perfection even for random sampling.

% Additionally, \emph{SCAL} always reaches perfection first although a few other competing strategies improve faster at the beginning.   

%\textbf{Experiment 4: varying number of clusters ($K$).} Finally we experiment with the effects of varying number of clusters. The full data dimension is 3, and all subspaces are two-dimensional. Each cluster has noise level $\sigma=0.1$, and $K$ varies from 3 to 5. Every cluster has a between-subspace angle of 30 degrees with the previously generated cluster. The comparative performance across all strategies are roughly the same, with the performance of \emph{SCAL} and \emph{SCAL-A} strategies exceed others significantly. However there does not seem to be significant difference in the performance of active strategies across these settings. 
%\right) 

\subsection{Real Data}
In this section, we conduct experiments on real-world data comparing \emph{SCAL} to various competing strategies. We show the advantage of our proposed active learning strategy when the data exhibit subspace structure. Specifically, we experiment with data sets in motion segmentation and  face clustering that have been used previously to demonstrate the effectiveness of subspace clustering \cite{elhamifar2013sparse}. 

For KSC-based experiments, we experiment with two initialisation schemes. First we initialise with the output given by KSC, which is the set of labels that gives the smallest reconstruction error out of 50 runs. Secondly, we initialise with the output from Sparse Subspace Clustering (SSC) \cite{elhamifar2013sparse} under the default model parameters. Due to the excellent performance of SSC, the aim is to investigate whether the correct initialisation of subspace bases would help accelerate the performance improvement.

All performance results are presented in two measures: the first row of each data set presents the percentage of data that need to be queried before perfect clustering is reached; the second row presents the area under the curve as a percentage of the total area. 

\begin{figure*}[h!]
	\centering
	% K=3
	\begin{subfigure}[t]{\textwidth}
		\centering
		\includegraphics[height=.26\textwidth, width=.32\textwidth]{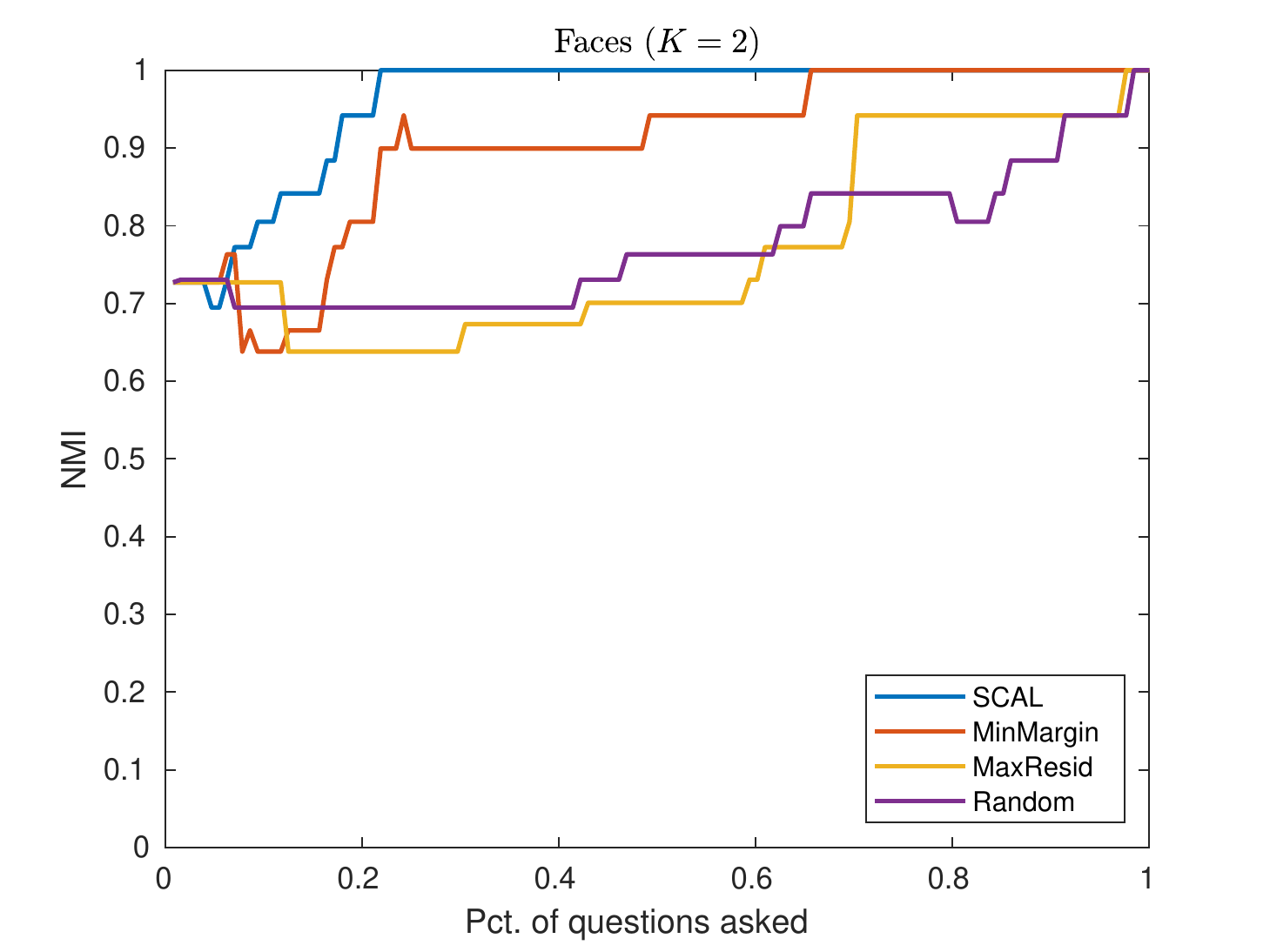}
		\includegraphics[height=.26\textwidth, width=.32\textwidth]{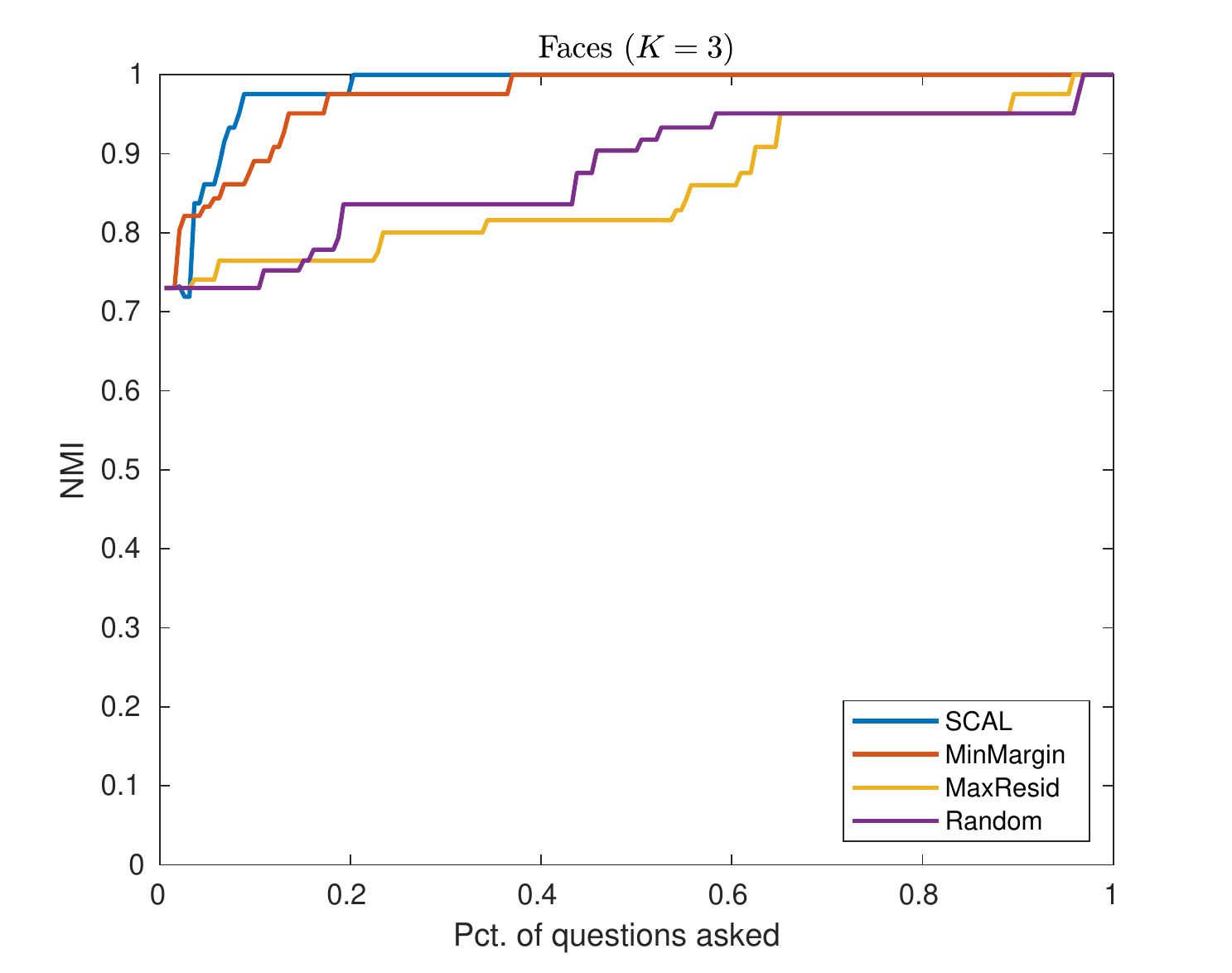}
		\includegraphics[height=.26\textwidth, width=.32\textwidth]{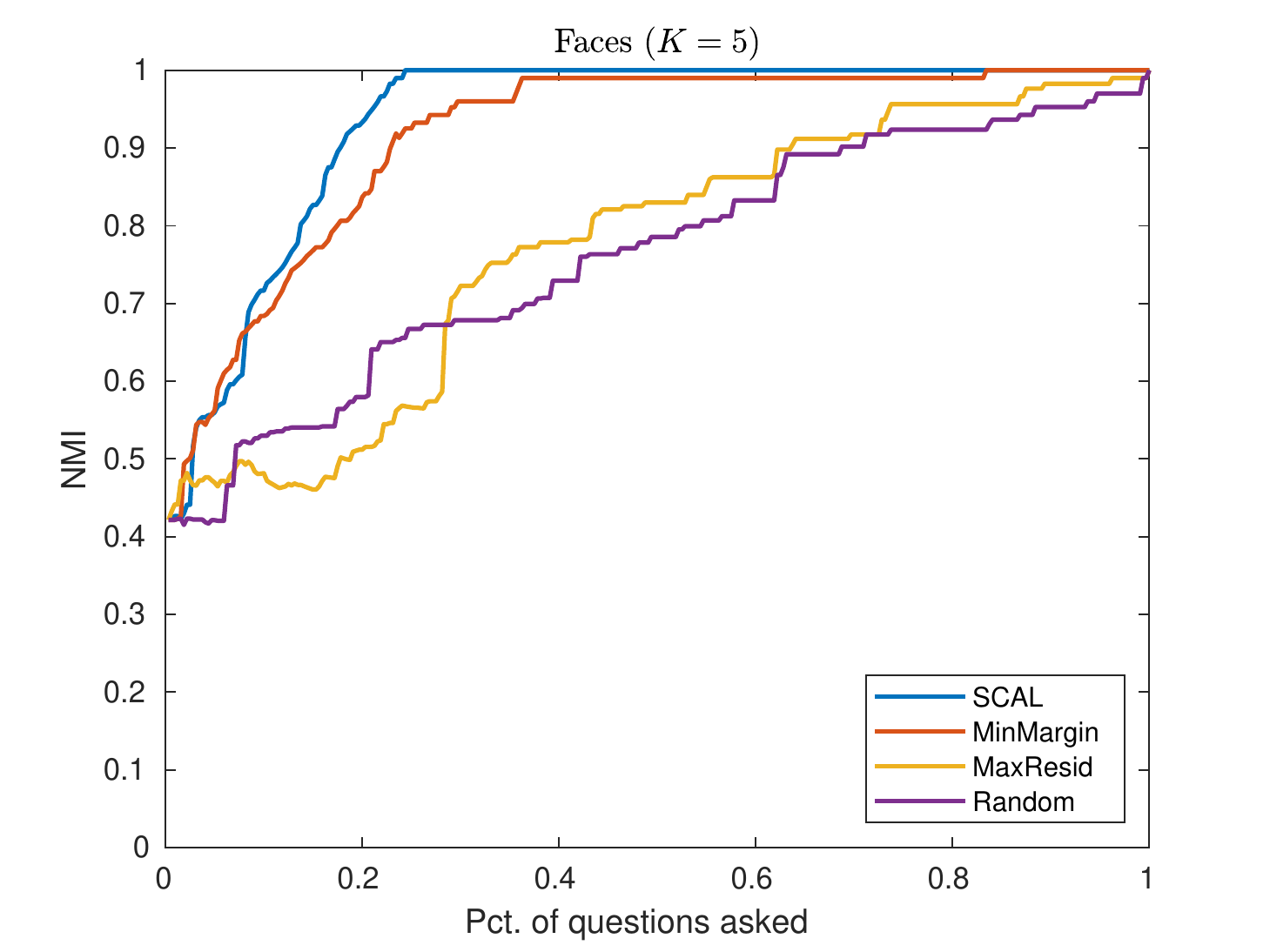}
	\end{subfigure}

	% K=5
	\begin{subfigure}[t]{\textwidth}
		\centering
		\includegraphics[height=.26\textwidth, width=.32\textwidth]{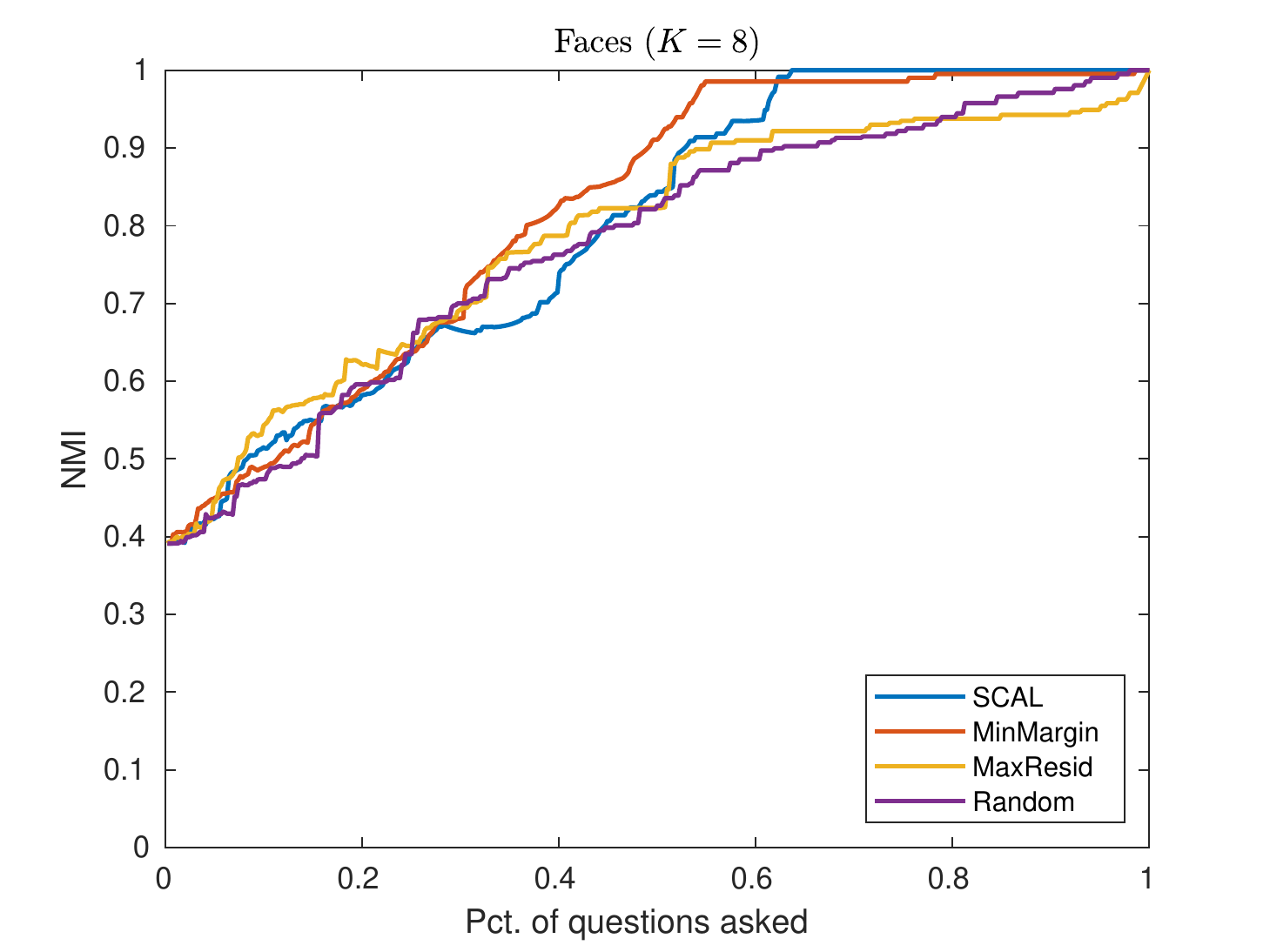}
		\includegraphics[height=.26\textwidth, width=.32\textwidth]{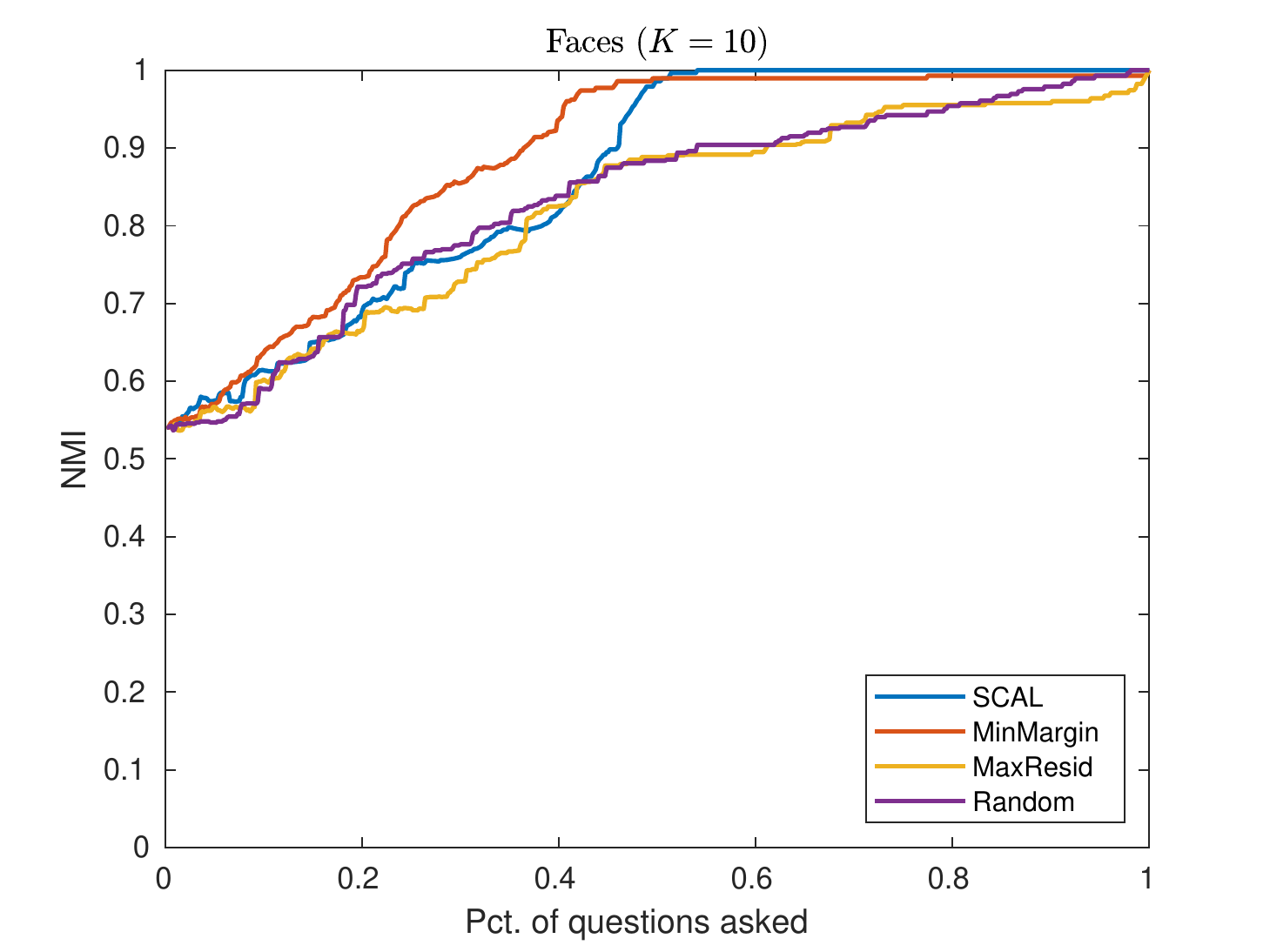}
	\end{subfigure}
	
	\caption{Performance results measured by NMI on Yale Faces datasets with KSC initialisation.} %From left to right: $K=2$, 3, 5, 8, and 10.}
	%\textbf{Left:} strategies start from random initialisation of the unprocessed dataset. \textbf{Middle:} strategies start from random initialisation of the whitened dataset. \textbf{Right:} strategies start from SSC initialisation of the unprocessed dataset.}
	\label{faces_vary}
\end{figure*}

\textbf{Motion segmentation.} In this set of experiments, we evaluate the performance of all strategies on six motion segmentation data sets \cite{tron2007benchmark}. Motion segmentation refers to the problem of separating the points in a sequence of frames that compose one video after being combined consecutively. Each point can be represented by a $2F$-dimensional vector, in which $F$ is the number of frames in the video \cite{elhamifar2013sparse}. 
Following the parameter setting from \cite{elhamifar2013sparse}, we set the subspace dimensionality $q=3$ and one point is queried at each iteration. 

%of the active learning procedure. 

% As for initialisation, we run KSC 50 times and choose the output of cluster assignments that yields the lowest total reconstruction error as the input of all active strategies for each data set. 

% Since all strategies apart from \emph{Random} are deterministic once the initialisation is fixed, we run \emph{Random} strategy 10 times and plot its average performance against the performance of all other strategies. 

% \textcolor{red}{one paragraph describing the data sets...}

%In Table \ref{real_data2}, we present the performance results for all strategies on all data sets as measured by the percentage of data that need to be queried before perfect cluster performance is reached. 

%The experiments are conducted under two settings: first all strategies are initialised with KSC; secondly the initialisation is replaced with the output from Sparse Subspace Clustering (SSC) \cite{elhamifar2013sparse} with the default model parameters. Due to the excellent performance of SSC, the aim is to investigate whether the correct initialisation of subspace bases would help accelerate the performance improvement. 

The performance results are summarised in Table \ref{real_data2}. It is worth noting that SSC achieves perfect performance on `truck2' and `kanatani3', thus there is no need for active learning. The performance improvement over iterations is shown in Fig. \ref{motion_figs}. We see that the performance of \emph{MinMargin} is very similar to that of \emph{SCAL} most of the time. This is also reflected in the second row of the performance of each data set in Table \ref{real_data2}. However \emph{SCAL} always achieves perfect cluster performance first, which is what we expect to see due to its ability to query potentially misclassified points that are also informative. The performance of \emph{MaxResid} also improves rapidly in most scenarios, but it struggles to find the points that lead to perfect performance.

% We see that SSC helps the active strategies to reach perfection faster in two scenarios. % Although \emph{SCAL} reaches perfection faster than \emph{MinMargin} most of the time, these two very similar rate of improvement.

\begin{table}[h!]
	\begin{center}
		\begin{tabular}{ |c|c|c|c|c|c|} 
			\hline
			\multicolumn{5}{|c|}{\textbf{KSC update (KSC initialisation)}}\\
			\hline
			& \emph{SCAL} & \emph{MinMargin}  & \emph{MaxResid} &\emph{Random} \\
			\hline
			\multirow{2}{*}{truck2} &\textbf{4.53\%}&90.63\%&99.70\%&91.84\%\\
			&\textbf{99.36\%}&95.46\%&86.60\%&89.75\%\\
			\hline
			\multirow{2}{*}{1R2TCR}&\textbf{26.98\%}&43.35\%&95.68\%&88.13\%\\
			&94.89\%&\textbf{95.96\%}&85.44\%&83.12\%\\
			\hline
			\multirow{2}{*}{kanatani3} &\textbf{26.03\%}&31.51\%&86.30\%&91.78\%\\
			&\textbf{86.05\%}&84.86\%&72.10\%&53.10\%\\
			\hline
			\multirow{2}{*}{three-cars} &\textbf{31.21\%}&60.12\%&99.42\%&72.25\%\\
			&\textbf{94.18\%}&89.07\%&86.73\%&87.16\%\\
			\hline
			\multirow{2}{*}{2R3RTC}  &\textbf{37.28\%}&52.51\%&46.49\%&99.20\%\\
			&96.68\%&\textbf{97.23\%}&95.37\%&87.86\%\\
			\hline
			\multirow{2}{*}{two-cranes} &\textbf{71.28\%}&81.92\%&89.36\%&97.87\%\\
			&\textbf{59.52\%}&58.09\%&53.09\%&58.49\%\\
			\hline	
			\hline
			\multicolumn{5}{|c|}{\textbf{KSC update (SSC initialisation)}}\\
			\hline	
			%SSC init
			%			\multirow{2}{*}{truck2}&N/A&N/A&N/A&N/A\\
			%			&N/A&N/A&N/A&N/A\\
			%			\hline
			\multirow{2}{*}{1R2TCR}&\textbf{26.98\%}&43.35\%&95.68\%&98.02\%\\
			&94.89\%&\textbf{95.96\%}&85.44\%&83.09\%\\
			\hline
			%			\multirow{2}{*}{kanatani3}&N/A&N/A&N/A&N/A\\
			%			&N/A&N/A&N/A&N/A\\
			%			\hline
			\multirow{2}{*}{three-cars} &\textbf{1.73\%}&83.82\%&99.42\%&84.39\%\\
			&\textbf{99.94\%}&97.61\%&95.52\%&97.49\%\\
			\hline
			\multirow{2}{*}{2R3RTC} &\textbf{9.82\%}&40.88\%&59.92\%&78.96\%\\
			&\textbf{99.76\%}&99.27\%&98.21\%&97.73\%\\
			\hline
			\multirow{2}{*}{two-cranes} &\textbf{73.40\%}&75.53\%&98.94\%&98.94\%\\
			&64.38\%&\textbf{71.23\%}&45.72\%&65.80\%\\
			
			\hline	
			
		\end{tabular}
	\end{center}
	\caption{Performance on motion segmentation data sets.}
	\label{real_data2}
\end{table}

\textbf{Face clustering.} The original Extended Yale Face Database~B~\cite{lee2005acquiring} consists of 64 images of 38 distinct faces under various lighting conditions. Each original image is of size 192 $\times$ 168, and have been downsampled to have size 48 $\times$ 42 \cite{elhamifar2013sparse}. It has previously been shown under the Lambertian assumption that images of a subject lie close to a linear subspace of dimension 9 \cite{basri2003lambertian}. Since the data have intrinsically low dimension, we preprocess the data by projecting onto its first 5$K$ principal components as has been done in \cite{balcan2007margin}. Following the experimental settings in \cite{elhamifar2013sparse}, we experiment with $K=2$, 3, 5, 8, and 10. The corresponding data sets are obtained from the SSC package in MATLAB \cite{elhamifar2013sparse}.

% \textcolor{red}{comment on how K is chosen, and why 3 and 5.}

% We conduct the following three set of experiments. 

% As before, the random initialisation is given by the set of labels that give the smallest reconstruction error out of 50 runs of the KSC algorithm. 
As before, we apply all active learning strategies with both KSC and SSC initialisations. It is worth noting that SSC achieves perfect performance on the preprocessed data when $K=2$, thus there is no need for active learning. From the results shown in Table \ref{faces_data}, we see that the percentage of data needed goes up as $K$ increases. Although the proportion of area under the curve is very similar between \emph{MinMargin} and \emph{SCAL}, \emph{MinMargin} requires a much higher percentage of queries than \emph{SCAL} before perfect clustering is reached.

The performance improvement over time with KSC initialisation is shown in Fig. \ref{faces_vary}. The initial performance decreases slowly as $K$ increases, and the performance of various active strategies gets closer. 
With that said, the performance results of \emph{SCAL} and \emph{MinMargin} still stand out from the rest. 

%Additionally in the case of SSC initialisation, \emph{MinMargin} has similar performance to \emph{SCAL} when $K$ is small but suffers more with the increase of $K$. 

\section{Extension to Spectral Clustering}
Finally, we make an initial attempt to extend our active learning framework to the spectral clustering setting. A large number of subspace clustering algorithms are spectral-based methods. These methods construct a pairwise affinity matrix through various optimisation schemes and solve the cluster assignment problem through spectral clustering \cite{elhamifar2013sparse, liu2010robust}. 
%There have been a number of active strategies proposed in the literature that are targeted for spectral based clustering \cite{wang2010active, xiong2017active, lipor2017leveraging}. 
We incorporate the queried information in the similarity matrix and compare with other strategies in the spectral setting.

%We leave out the active strategy proposed in \cite{wang2010active}, because it only deals with the case for $K=2$. 

Using our proposed strategy, the points are queried in the same manner as before.
%In the query and update procedure, the points are queried in the same manner and the cluster assignments of unlabelled points are also updated using KSCC as before. 
Upon receiving the class information of some points, the constraints are satisfied by editing the affinity matrix. 
Following the update procedure in \cite{lipor2017leveraging}, we set to ones for those labelled data that belong to the same class and zeros for those that lie in different classes. 
%Furthermore, we also set to zeros for the unlabelled data whose updated assignments belong to different clusters. 
%This is to make sure that the unlabelled data also enjoy the benefit of the updated subspace bases, and that their updated memberships are reflected through the affinity matrix.
Spectral clustering is then applied to the edited affinity matrix to obtain labels for all points. As a final step, we apply KSCC to ensure that all grouping constraints are satisfied for the labelled data.

The performance results are shown in the last section of Table \ref{faces_data}. Note that the authors that propose \emph{MinMargin} have renamed it to \emph{SUPERPAC} for the spectral setting. \emph{SCAL} outperforms other competing strategies in all scenarios apart from when $K=10$. With that said, \emph{SCAL} still enjoys the same level of rate of improvement as \emph{SUPERPAC} and \emph{MaxResid} when $K=10$.

\begin{table}
	\begin{center}
		\begin{tabular}{ |c|c|c|c|c|c|c| } 
			\hline

			\multicolumn{6}{|c|}{\textbf{KSC update (KSC initialisation)}}\\
			% reduced dimension (5K)
			\hline
			Strategies & $K=2$ & $K=3$ & $K=5$&$K=8$&$K=10$\\
			\hline
			\multirow{2}{*}{\emph{SCAL}} &\textbf{21.09\%}&\textbf{19.79\%}&\textbf{24.06\%}&\textbf{63.48\%}&\textbf{53.91\%}\\
			&\textbf{96.10\%}&\textbf{98.28\%}&\textbf{93.80\%}&80.02\%&86.45\%\\
			\hline
			%\emph{SCAL-A}  &\textbf{21.88\%}&\textbf{20.31\%}&44.06\%&\textbf{58.40\%}&60.63\%\\  
			%\emph{SCAL-D}  &96.88\%&85.94\%&89.69\%&96.09\%&99.69\%\\
			\multirow{2}{*}{\emph{MinMargin}}  &64.84\%&36.46\%&83.13\%&98.44\%&99.84\%\\
			&90.30\%&97.31\%&91.83\%&\textbf{81.92\%}&\textbf{88.95\%}\\
			\hline
			\multirow{2}{*}{\emph{MaxResid}}  &96.88\%&95.31\%&99.69\%&99.81\%&99.84\%\\
			&77.19\%&85.85\%&77.43\%&79.01\%&82.22\%\\
			\hline
			\multirow{2}{*}{\emph{Random}}  &97.66\%&96.35\%&99.69\%&97.85\%&97.97\%\\
			&77.59\%&88.08\%&76.27\%&77.77\%&83.46\%\\
			\hline

		\end{tabular}
	\end{center}
	
	\begin{center}
		\begin{tabular}{ |c|c|c|c|c|c| } 
			% , $\alpha = 20$
			\hline
			\multicolumn{5}{|c|}{\textbf{KSC update (SSC initialisation)}}\\
			% reduced dimension (5K)
			\hline
			Strategies  & $K=3$ & $K=5$&$K=8$&$K=10$\\
			\hline
			\multirow{2}{*}{\emph{SCAL}}  &\textbf{10.94\%}&\textbf{25.31\%}&\textbf{28.52\%}&\textbf{58.91\%}\\
			&\textbf{99.06\%}&\textbf{99.09\%}&\textbf{98.24\%}&95.15\%\\
			\hline
			%\emph{SCAL-A}  &N/A &\textbf{11.46\%}&28.75\%&29.10\%&\textbf{56.09\%}\\  
			%\emph{SCAL-D}  &N/A &81.77\%&70.94\%&92.97\%&98.44\%\\
			\multirow{2}{*}{\emph{MinMargin}}   &16.15\%&38.13\%&92.58\%&62.18\%\\
			&98.59\%&98.14\%&98.00\%&\textbf{97.15\%}\\
			\hline
			\multirow{2}{*}{\emph{MaxResid}}  &93.23\%&99.69\%&99.81\%&99.38\%\\
			&91.50\%&82.66\%&89.75\%&93.34\%\\
			\hline
			\multirow{2}{*}{\emph{Random}}   &91.15\%&76.25\%&99.02\%&91.88\%\\
			&92.19\%&94.40\%&91.59\%&94.75\%\\
			\hline

		\end{tabular}
	\end{center}
	\begin{center}
		\begin{tabular}{ |c|c|c|c|c|c| } 
			\hline
			\multicolumn{5}{|c|}{\textbf{Spectral update (SSC initialisation)}}\\
			% reduced dimension (5K), q = 9
			\hline
			Strategies  & $K=3$ & $K=5$&$K=8$&$K=10$\\
			\hline
			\multirow{2}{*}{\emph{SCAL}} &\textbf{10.94\%}&\textbf{26.56\%}&\textbf{40.62\%}&26.56\%\\
			&\textbf{98.91\%}&\textbf{96.03\%}&\textbf{91.08\%}&98.24\%\\
			\hline
			%\emph{SCAL-A} &N/A&\textbf{12.50\%}&28.44\%&28.91\%&\textbf{55.94\%}\\  
			%\emph{SCAL-D}  &N/A&87.50\%&70.63\%&92.77\%&98.28\%\\
			\multirow{2}{*}{\emph{SUPERPAC}} &14.06\%&31.25\%&\textbf{40.62\%}&\textbf{20.31\%}\\
			&98.39\%&95.96\%&90.03\%&98.67\%\\
			\hline
			\multirow{2}{*}{\emph{MaxResid}} &90.62\%&98.44\%&98.44\%&57.81\%\\
			&93.23\%&92.84\%&86.81\%&\textbf{98.76\%}\\
			\hline
			\multirow{2}{*}{\emph{Random}}  &95.31\%&98.44\%&92.19\%&96.88\%\\
			&90.90\%&92.65\%&87.50\%&96.06\%\\
			\hline	
		\end{tabular}
	\end{center}
	\caption{Performance on Yale Faces data sets.}
	\label{faces_data}
\end{table}

\section{Conclusions \& Future Work} \label{al_conclusion}
We proposed a novel active learning framework for subspace clustering. Ideas from matrix perturbation theory are borrowed to enable efficient estimation of the influence of data deletion and addition as measured by the change in the reconstruction error. New results on the perturbation analysis of data addition are provided as a by-product of our proposed active learning framework. In addition, we propose a constrained subspace clustering algorithm KSCC that monotonically decreases the constrained objective over iterations. 

For future research, we plan to consider extensions of our proposed framework in the spectral setting. Our initial experimental results seem promising on the Faces data using the straightforward spectral update on the affinity matrix. 

%However it is desirable to have an update procedure for the affinity matrix that takes into the constraints for both the labelled and unlabelled data.

%Limitations:
%\begin{itemize}
%	\item There could be more than one oracle (for example more than one human being providing labels for online products), and the oracle might not be consistent with each other. Therefore, soft constraints could be considered in the future.
%\end{itemize}
%
\section*{Acknowledgment}
Hankui Peng acknowledges the financial support of Lancaster
University and the Office for National Statistics Data Science
Campus as part of the EPSRC-funded STOR-i Centre for
Doctoral Training.

\bibliographystyle{IEEEtran}

% Generated by IEEEtran.bst, version: 1.14 (2015/08/26)
\begin{thebibliography}{10}
	\providecommand{\url}[1]{#1}
	\csname url@samestyle\endcsname
	\providecommand{\newblock}{\relax}
	\providecommand{\bibinfo}[2]{#2}
	\providecommand{\BIBentrySTDinterwordspacing}{\spaceskip=0pt\relax}
	\providecommand{\BIBentryALTinterwordstretchfactor}{4}
	\providecommand{\BIBentryALTinterwordspacing}{\spaceskip=\fontdimen2\font plus
		\BIBentryALTinterwordstretchfactor\fontdimen3\font minus
		\fontdimen4\font\relax}
	\providecommand{\BIBforeignlanguage}[2]{{%
			\expandafter\ifx\csname l@#1\endcsname\relax
			\typeout{** WARNING: IEEEtran.bst: No hyphenation pattern has been}%
			\typeout{** loaded for the language `#1'. Using the pattern for}%
			\typeout{** the default language instead.}%
			\else
			\language=\csname l@#1\endcsname
			\fi
			#2}}
	\providecommand{\BIBdecl}{\relax}
	\BIBdecl
	
	\bibitem{su2012crowdsourcing}
	H.~Su, J.~Deng, and L.~Fei-Fei, ``Crowdsourcing annotations for visual object
	detection,'' in \emph{Workshops at the Twenty-Sixth AAAI Conference on
		Artificial Intelligence}, 2012.
	
	\bibitem{settles2008curious}
	B.~Settles, ``Curious machines: Active learning with structured instances,''
	Ph.D. dissertation, University of Wisconsin--Madison, 2008.
	
	\bibitem{lipor2015margin}
	J.~Lipor and L.~Balzano, ``Margin-based active subspace clustering,'' in
	\emph{2015 IEEE 6th International Workshop on Computational Advances in
		Multi-Sensor Adaptive Processing (CAMSAP)}, 2015.
	
	\bibitem{lipor2017leveraging}
	J.~Lipor and L.~Balzano, ``Leveraging union of subspace structure to improve constrained
	clustering,'' in \emph{Proceedings of the 34th International Conference on
		Machine Learning-Volume 70}.\hskip 1em plus 0.5em minus 0.4em\relax JMLR.
	org, 2017.
	
	\bibitem{bradley2000k}
	P.~S. Bradley and O.~L. Mangasarian, ``K-plane clustering,'' \emph{Journal of
		Global Optimization}, vol.~16, no.~1, pp. 23--32, 2000.
	
	\bibitem{jolliffe2011principal}
	I.~Jolliffe, ``Principal component analysis,'' in \emph{International
		encyclopedia of statistical science}.\hskip 1em plus 0.5em minus 0.4em\relax
	Springer, 2011, pp. 1094--1096.
	
	\bibitem{critchley1985influence}
	F.~Critchley, ``Influence in principal components analysis,''
	\emph{Biometrika}, vol.~72, no.~3, pp. 627--636, 1985.
	
	\bibitem{balcan2007margin}
	M.-F. Balcan, A.~Broder, and T.~Zhang, ``Margin based active learning,'' in
	\emph{International Conference on Computational Learning Theory}, 2007.
	
	\bibitem{seung1992query}
	H.~S. Seung, M.~Opper, and H.~Sompolinsky, ``Query by committee,'' in
	\emph{Proceedings of the fifth annual workshop on Computational learning
		theory}, 1992.
	
	\bibitem{settles2008multiple}
	B.~Settles, M.~Craven, and S.~Ray, ``Multiple-instance active learning,'' in
	\emph{Advances in neural information processing systems}, 2008.
	
	\bibitem{roy2001toward}
	N.~Roy, and A.~McCallum, ``Toward optimal active learning through monte carlo estimation of error reduction,'' in
	\emph{ICML}, 2001.
	
	\bibitem{donmez2007dual}
	P.~Donmez, J.~G. Carbonell, and P.~N. Bennett, ``Dual strategy active
	learning,'' in \emph{European Conference on Machine Learning}, 2007.
	
	\bibitem{melville2004diverse}
	P.~Melville and R.~J. Mooney, ``Diverse ensembles for active learning,'' in
	\emph{Proceedings of the twenty-first international conference on Machine
		learning}, 2004.
	
	\bibitem{nguyen2004active}
	H.~T. Nguyen and A.~Smeulders, ``Active learning using pre-clustering,'' in
	\emph{Proceedings of the twenty-first international conference on Machine
		learning}, 2004.
	
	\bibitem{freund1997selective}
	Y.~Freund, H.~S. Seung, E.~Shamir, and N.~Tishby, ``Selective sampling using
	the query by committee algorithm,'' \emph{Machine learning}, vol.~28, no.
	2-3, pp. 133--168, 1997.
	
	\bibitem{shi1997local}
	L.~Shi, ``Local influence in principal components analysis,''
	\emph{Biometrika}, vol.~84, no.~1, pp. 175--186, 1997.
	
	\bibitem{enguix2005influence}
	A.~Enguix-Gonz{\'a}lez, J.~Mu{\~n}oz-Pichardo, J.~Moreno-Rebollo, and
	R.~Pino-Mej{\'\i}as, ``Influence analysis in principal component analysis
	through power-series expansions,'' \emph{Communications in
		Statistics—Theory and Methods}, vol.~34, no. 9-10, pp. 2025--2046, 2005.
	
	\bibitem{wang1993effects}
	S.-G. Wang and E.~P. Liski, ``Effects of observations on the eigensystem of a
	sample covariance matrix,'' \emph{Journal of statistical planning and
		inference}, vol.~36, no. 2-3, pp. 215--226, 1993.
	
	\bibitem{benasseni2018correction}
	J.~B{\'e}nass{\'e}ni, ``A correction of approximations used in sensitivity
	study of principal component analysis,'' \emph{Computational Statistics},
	2018.
	
	\bibitem{golub2012matrix}
	G.~H. Golub and C.~F. Van~Loan, \emph{Matrix computations}. JHU Press, 2012.
	
	\bibitem{kuhn1955hungarian}
	H.~W. Kuhn, ``The hungarian method for the assignment problem,'' \emph{Naval
		research logistics quarterly}, vol.~2, no. 1-2, pp. 83--97, 1955.
	
	\bibitem{cover2012elements}
	T.~M. Cover and J.~A. Thomas, \emph{Elements of information theory}. John Wiley \& Sons, 2012.
	
	\bibitem{elhamifar2013sparse}
	E.~Elhamifar and R.~Vidal, ``Sparse subspace clustering: Algorithm, theory, and
	applications,'' \emph{IEEE transactions on pattern analysis and machine
		intelligence}, vol.~35, no.~11, pp. 2765--2781, 2013.
	
	\bibitem{tron2007benchmark}
	R.~Tron and R.~Vidal, ``A benchmark for the comparison of 3-d motion
	segmentation algorithms,'' in \emph{2007 IEEE conference on computer vision
		and pattern recognition}.\hskip 1em plus 0.5em minus 0.4em\relax IEEE, 2007,
	pp. 1--8.
	
	\bibitem{lee2005acquiring}
	K.-C. Lee, J.~Ho, and D.~J. Kriegman, ``Acquiring linear subspaces for face
	recognition under variable lighting,'' \emph{IEEE Transactions on Pattern
		Analysis \& Machine Intelligence}, no.~5, pp. 684--698, 2005.
	
	\bibitem{basri2003lambertian}
	R.~Basri and D.~W. Jacobs, ``Lambertian reflectance and linear subspaces,''
	\emph{IEEE Transactions on Pattern Analysis \& Machine Intelligence}, no.~2,
	pp. 218--233, 2003.
	
	\bibitem{liu2010robust}
	G.~Liu, Z.~Lin, and Y.~Yu, ``Robust subspace segmentation by low-rank
	representation,'' in \emph{Proceedings of the 27th international conference
		on machine learning (ICML-10)}, 2010, pp. 663--670.
	
	\bibitem{wang2010active}
	X.~Wang and I.~Davidson, ``Active spectral clustering,'' in \emph{2010 IEEE
		International Conference on Data Mining}.\hskip 1em plus 0.5em minus
	0.4em\relax IEEE, 2010, pp. 561--568.
	
	\bibitem{xiong2017active}
	C.~Xiong, D.~M. Johnson, and J.~J. Corso, ``Active clustering with model-based
	uncertainty reduction,'' \emph{IEEE transactions on pattern analysis and
		machine intelligence}, vol.~39, no.~1, pp. 5--17, 2017.
	
\end{thebibliography}

\appendix

\noindent\textbf{Proof of Proposition 1}

In this section, we provide the proof for Proposition 1 in Section \ref{query} of the paper regarding the perturbed form of the covariance matrix after data addition. 

\textbf{Proposition 1.} The form of $S_{(I)}^{+}$ can be expressed as,
\begin{equation}
\begin{split}
S_{(I)}^{+}=S+\frac{l}{n+l}\left[(S_{I}-S)-(\bar{\textbf{x}}_{I}+\bar{\textbf{x}})(\bar{\textbf{x}}_{I}+\bar{\textbf{x}})^{T} \right]\\
+\frac{l^{2}}{(n+l)^{2}}\left(\bar{\textbf{x}}_{I}+\bar{\textbf{x}} \right)\left(\bar{\textbf{x}}_{I}+\bar{\textbf{x}} \right)^{T}.
\end{split}
\end{equation}

\begin{proof}
	% from phd notes paper page 29-30
	The sample covariance matrices $S$, $S_{I}$, and $S_{(I)}^{+}$ are given as
	\begin{equation}
	\begin{aligned}
	&S=\frac{1}{n}X^{T}\left(\textbf{I}_{n}-\frac{1}{n}\textbf{1}_{n}\textbf{1}_{n}^{T} \right)X,\\
	&S_{I}=\frac{1}{l}X_{I}^{T}\left(\textbf{I}_{l}-\frac{1}{l}\textbf{1}_{l}\textbf{1}_{l}^{T} \right)X_{I},\\
	&S_{(I)}^{+}=\frac{1}{n+l}X_{I_{+}}^{T}\left(\textbf{I}_{n+l}-\frac{1}{n+l}\textbf{1}_{n+l}\textbf{1}_{n+l}^{T} \right)X_{I_{+}},
	\end{aligned}
	\end{equation}
	in which $\textbf{I}_{n}$ is an identity matrix of size $n\times n$ and $\textbf{1}_{n}$ is a vector of all ones that has length $n$. Starting from the expression for $S_{(I)}^{+}$ above, we can write:
	\begin{equation*}
	\begin{aligned}
	(n+l)S_{(I)}^{+}
	&=X_{I_{+}}^{T}X_{I_{+}}-\frac{1}{n+l}X_{I_{+}}^{T}\textbf{1}_{n+l}\textbf{1}_{n+l}^{T}X_{I_{+}}\\
	&=X^{T}X+X_{I}^{T}X_{I}-\frac{1}{n+l} \bar{\textbf{x}}_{I_{+}}\bar{\textbf{x}}_{I_{+}}^{T}\\
	&=X^{T}X+X_{I}^{T}X_{I}-\frac{1}{n+l}(n\bar{\textbf{x}}+l\bar{\textbf{x}}_{I})(n\bar{\textbf{x}}+l\bar{\textbf{x}}_{I})^{T}\\
	&=nS+lS_{I}+\frac{1}{n}X^{T}\textbf{1}_{n}\textbf{1}_{n}^{T}X+\frac{1}{l}X_{I}^{T}\textbf{1}_{l}\textbf{1}_{l}^{T}X_{I}\\&\hspace{1.5em}-\frac{1}{n+l}(n\bar{\textbf{x}}+l\bar{\textbf{x}}_{I})(n\bar{\textbf{x}}+l\bar{\textbf{x}}_{I})^{T}\\
	%&=nS+lS_{I}+n\bar{\textbf{x}}\bar{\textbf{x}}^{T}+l\bar{\textbf{x}}_{I}\bar{\textbf{x}}_{I}^{T}-\frac{1}{n+l}(n\bar{\textbf{x}}+l\bar{\textbf{x}}_{I})(n\bar{\textbf{x}}+l\bar{\textbf{x}}_{I})^{T}\\
	&=nS+lS_{I}+n\bar{\textbf{x}}\bar{\textbf{x}}^{T}+l\bar{\textbf{x}}_{I}\bar{\textbf{x}}_{I}^{T}\\&\hspace{1.5em}-\frac{nl}{n+l}(\frac{n}{l}\bar{\textbf{x}}\bar{\textbf{x}}^{T}+\bar{\textbf{x}}\bar{\textbf{x}}_{I}^{T}+\bar{\textbf{x}}_{I}\bar{\textbf{x}}^{T}+\frac{l}{n}\bar{\textbf{x}}_{I}\bar{\textbf{x}}_{I}^{T})\\
	&=nS+lS_{I}-\frac{nl}{n+l}(\bar{\textbf{x}}\bar{\textbf{x}}^{T}+\bar{\textbf{x}}\bar{\textbf{x}}_{I}^{T}+\bar{\textbf{x}}_{I}\bar{\textbf{x}}^{T}+\bar{\textbf{x}}_{I}\bar{\textbf{x}}_{I}^{T})\\
	&=nS+lS_{I}-\frac{nl}{n+l}(\bar{\textbf{x}}_{I}+\bar{\textbf{x}})(\bar{\textbf{x}}_{I}+\bar{\textbf{x}})^{T}\\
	&=(n+l)S+l\left(S_{I}-S \right)-l(\bar{\textbf{x}}_{I}+\bar{\textbf{x}})(\bar{\textbf{x}}_{I}+\bar{\textbf{x}})^{T}\\&\hspace{1.5em}+\frac{l^{2}}{n+l}(\bar{\textbf{x}}_{I}+\bar{\textbf{x}})(\bar{\textbf{x}}_{I}+\bar{\textbf{x}})^{T},
	\end{aligned}
	\end{equation*}
	from which the result follows.
	%Moving the $(n+l)$ from the left hand side to right hand side, we then obtain the expression for $S_{(I)}^{+}$ as in Eq. \ref{eq_si+}.
\end{proof}

\noindent\textbf{Proof of Proposition 2}

In this section, we provide the proof for Proposition 1 in Section \ref{query} of the paper regarding the perturbed form of the covariance matrix after the addition of one single point.

\textbf{Proposition 2.} Following the same line of analysis, we can show that the perturbed covariance matrix in the case when $l=1$ can be expressed as,
\begin{equation}
\begin{split}
S_{(i)}^{+}=S+\frac{1}{n+1}\left[\left(\bar{\textbf{x}}-\textbf{x}_{i} \right)\left(\bar{\textbf{x}}-\textbf{x}_{i} \right)^{T}-S \right]\\
-\frac{1}{(n+1)^2}\left(\bar{\textbf{x}}-\textbf{x}_{i} \right)\left(\bar{\textbf{x}}-\textbf{x}_{i} \right)^{T},
\end{split}
\end{equation}
in which we can think of $\frac{1}{n+1}$ and $\left[\left(\bar{\textbf{x}}-\textbf{x}_{i} \right)\left(\bar{\textbf{x}}-\textbf{x}_{i} \right)^{T}-S\right]$ as $\epsilon$ and $S^{(1)}$ respectively.

\begin{proof}
	\begin{equation*}
	\begin{aligned}
	(n+1)S_{(i)}^{+}
	&=X_{i_{+}}^{T}X_{i_{+}}-\frac{1}{n+1}X_{i_{+}}^{T}\textbf{1}_{n+1}\textbf{1}_{n+1}^{T}X_{i_{+}}\\
	&=X^{T}X+\textbf{x}_{i}\textbf{x}_{i}^{T}-\frac{1}{n+1}(n\bar{\textbf{x}}+\textbf{x}_{i})(n\bar{\textbf{x}}+\textbf{x}_{i})^{T}\\
	&=nS+\frac{1}{n}X^{T}\textbf{1}_{n}\textbf{1}_{n}^{T}X+\textbf{x}_{i}\textbf{x}_{i}^{T}\\&\hspace{1.5em}-\frac{1}{n+l}(n\bar{\textbf{x}}+l\bar{\textbf{x}}_{I})(n\bar{\textbf{x}}+l\bar{\textbf{x}}_{I})^{T}\\
	&=nS+\frac{1}{n}X^{T}\textbf{1}_{n}\textbf{1}_{n}^{T}X\\&\hspace{1.5em}
	-\frac{n}{n+1}\left(\bar{\textbf{x}}\bar{\textbf{x}}^{T}+2\bar{\textbf{x}}\textbf{x}_{i}^{T}-n\textbf{x}_{i}\textbf{x}_{i}^{T} \right)\\
	&=nS+n\bar{\textbf{x}}\bar{\textbf{x}}- \frac{n}{n+1}\left(\bar{\textbf{x}}\bar{\textbf{x}}^{T}+2\bar{\textbf{x}}\textbf{x}_{i}^{T}-n\textbf{x}_{i}\textbf{x}_{i}^{T} \right)\\
	&=nS-\frac{n}{n+1}\left(2\bar{\textbf{x}}\textbf{x}_{i}^{T}-\bar{\textbf{x}}\bar{\textbf{x}}^{T}-\textbf{x}_{i}\textbf{x}_{i}^{T} \right)\\
	&=nS+\frac{n}{n+1}\left(\bar{\textbf{x}}-\textbf{x}_{i} \right)\left(\bar{\textbf{x}}-\textbf{x}_{i} \right)^{T}\\
	&=nS+\left(\bar{\textbf{x}}-\textbf{x}_{i} \right)\left(\bar{\textbf{x}}-\textbf{x}_{i} \right)^{T}\\&\hspace{1.5em}-\frac{1}{n+1}\left(\bar{\textbf{x}}-\textbf{x}_{i} \right)\left(\bar{\textbf{x}}-\textbf{x}_{i} \right)^{T}.
	\end{aligned}
	\end{equation*}
\end{proof}
\end{document}